\documentclass[]{Spark_Diffusion_tech_report}

\graphicspath{{figures/}}
\usepackage{amsmath,amsfonts,bm}

\def\eqref#1{equation~\ref{#1}}

\def\1{\bm{1}}

\DeclareMathAlphabet{\mathsfit}{\encodingdefault}{\sfdefault}{m}{sl}
\SetMathAlphabet{\mathsfit}{bold}{\encodingdefault}{\sfdefault}{bx}{n}

\usepackage{url}
\usepackage{wrapfig}
\usepackage{nicefrac}
\usepackage{multicol}
\usepackage{makecell}
\usepackage{dsfont}
\usepackage{epstopdf}
\usepackage{tikz}
\usetikzlibrary{arrows.meta,positioning,calc}
\usepackage{algorithm}
\usepackage{algpseudocode}
\usepackage{mathrsfs}
\usepackage{wasysym}
\usepackage{threeparttable}
\usepackage{float}
\usepackage{tabularray}
\usepackage{longtable}
\usepackage{needspace}
\usepackage[disable,textsize=tiny]{todonotes}
\mathtoolsset{showonlyrefs=true}
\definecolor{SparkDiffusionRed}{HTML}{B9472F}
\definecolor{SparkDiffusionOrange}{HTML}{E97845}
\definecolor{SparkDiffusionLight}{HTML}{FFF0E9}
\definecolor{SparkDiffusionHeader}{HTML}{FFF7F2}
\definecolor{SparkDiffusionGray}{HTML}{666666}

\newcommand{\best}[1]{\textbf{#1}}
\makeatletter
\@ifundefined{c@theorem}{%
  
}{}

\makeatother

\definecolor{zzk}{RGB}{67,151,143}

\definecolor{lyx}{RGB}{221,160,221}

\definecolor{hyp}{RGB}{122,20,122}

\newtcolorbox{limbox}{
  enhanced,
  breakable,
  colback=orange!5,
  frame hidden,
  boxrule=0pt,
  borderline west={1.2pt}{0pt}{orange!55!black},
  sharp corners,
  left=6pt,
  right=2pt,
  top=3pt,
  bottom=3pt,
  before skip=5pt,
  after skip=5pt
}

\newtcolorbox{limbox_blue}{
  enhanced, breakable, colback=blue!5,
  frame hidden, boxrule=0pt,
  borderline west={1.2pt}{0pt}{blue!55!black},
  sharp corners, left=6pt, right=2pt, top=3pt, bottom=3pt,
  before skip=5pt, after skip=5pt
}

\providecommand{\method}{\textsc{SparkDiffusion}\xspace}

\title{SparkDiffusion: Mitigating the High-Sparsity Trap --- A Unified Framework for up to $265\times$ Single-GPU Acceleration of Visual Generation}
\author[13]{Yuxi Liu$^*$}
\author[23]{Haoyu Li$^*$}
\author[1]{Zekun Zhang$^*$}
\author[3]{Tengxu Sun$^\dagger$}
\author[1]{Yixiang Cai}
\author[35]{Jiayong Li}
\author[6]{Yifei Xia}
\author[4]{Tianle Liu}
\author[3]{Baole Ai}
\author[3]{Ang Wang}
\author[3]{Jiamang Wang}
\author[3]{Lin Qu}
\author[2]{Kai Zhang}
\author[1]{Kun Yuan$^\dagger$}
\author[6]{Bin Cui$^\dagger$}

\affiliation[1]{Peking University, Melon Group}
\affiliation[2]{Tsinghua University}
\affiliation[3]{Alibaba Group}
\affiliation[4]{University of Electronic Science and Technology of China}
\affiliation[5]{Harbin Institute of Technology}
\affiliation[6]{Peking University}

\appto{\affiliationlist}{%
  \par\vskip 1.5mm
  {\affiliationfont\sffamily\bfseries
   Project Page:
   \href{https://sparkdiffusion.github.io/}{\nolinkurl{https://sparkdiffusion.github.io/}}%
   \\
   GitHub:
   \href{https://github.com/AlibabaResearch/SparkDiffusion}{\nolinkurl{https://github.com/AlibabaResearch/SparkDiffusion}}%
  }%
}
\paperemail{%
\begin{tabular}[t]{@{}ll@{}}
\href{mailto:yuxiliu666@stu.pku.edu.cn}{yuxiliu666@stu.pku.edu.cn}
&
\qquad
\href{mailto:hy-l24@mails.tsinghua.edu.cn}{hy-l24@mails.tsinghua.edu.cn}
\\[-1pt]
\href{mailto:zhangzekun@std.uestc.edu.cn}{zhangzekun@std.uestc.edu.cn}
&
\qquad
\href{mailto:lingyan.sk@alibaba-inc.com}{lingyan.sk@alibaba-inc.com}
\end{tabular}%
}

\correspondence{Tengxu Sun, Kun Yuan, Bin Cui}
\firstpagenotes{$^*$Equal contribution. $^\dagger$Corresponding author.}
\abstract{
Video diffusion transformers are expensive because attention dominates long spatiotemporal token sequences.
We identify the \emph{high-sparsity trap}: at extreme attention sparsity, step-local training losses keep decreasing while terminal generation quality stagnates or degrades.
The trap is one of supervision: the dominant terminal errors originate in the high-noise structure-generation stage, and terminal-aligned training corrects terminal errors that substantially extended step-local training cannot.
This yields a simple staging principle: \emph{first adapt the sparse architecture into a coarse prior, then correct the terminal distribution}.
We instantiate the principle as \method, a unified acceleration framework for visual generation that combines a short sparse warm-up, few-step trajectory-mixed distillation, and FP8 quantization with fused kernels.
\method sustains $97\%$ attention sparsity with strong visual quality on long-sequence 720P generation across Wan2.1/Wan2.2 backbones and T2V/I2V tasks, and $90\%$ sparsity on Wan2.1-T2V-1.3B-480P. With 3-step CFG-free inference, \method achieves a $265\times$ end-to-end speedup over the 50-step CFG dense baseline for Wan2.1-T2V-14B-720P on a single RTX~5090 ($220\times$ on H100), and denoises a Wan2.1-T2V-1.3B-480P video in $1.3$s.
}

\begin{document}

\maketitle
\vspace{-0.3cm}
\begin{figure}[H]
    \centering
    \includegraphics[width=0.94\linewidth]{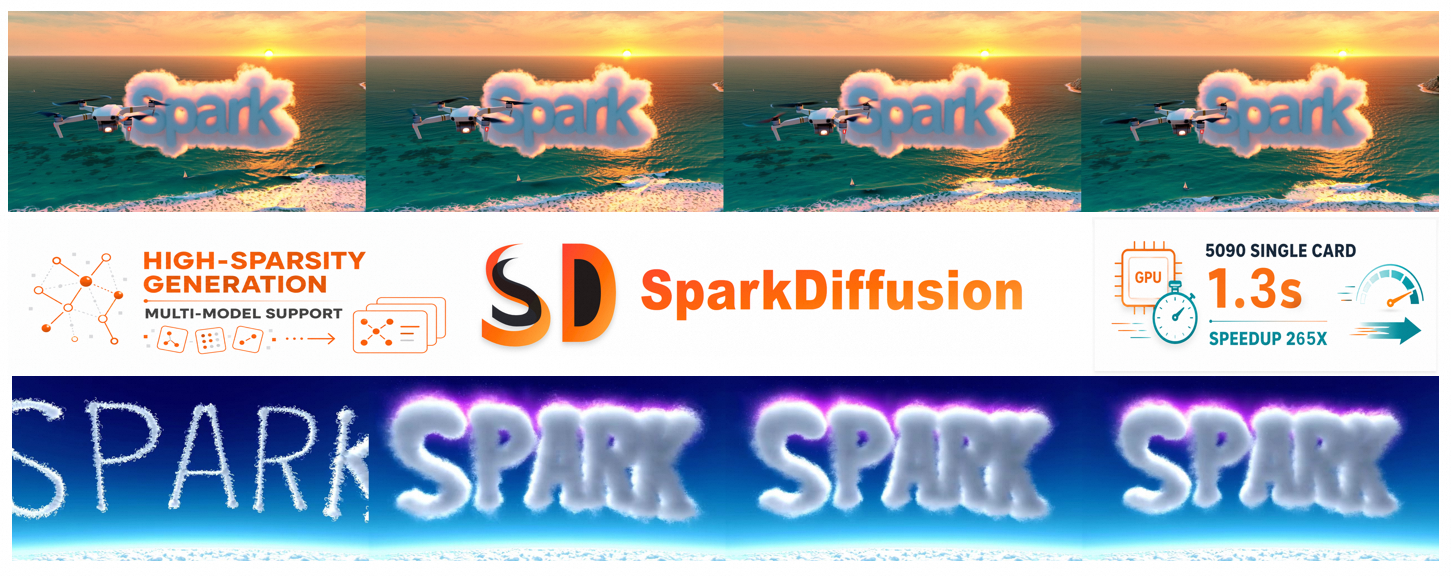}

\caption{
\textbf{SparkDiffusion teaser.} From one pretrained dense video DiT, \method delivers high quality frames across Wan2.1/Wan2.2, T2V/I2V, and 480P/720P with 3-step inference, sustaining $97\%$ attention sparsity on long-sequence 720P models and $90\%$ on the compact Wan2.1-T2V-1.3B-480P.
}

    \label{fig:Spark_demo}
\end{figure}

\begin{figure}[H]
    \centering
    \includegraphics[width=0.9\linewidth]{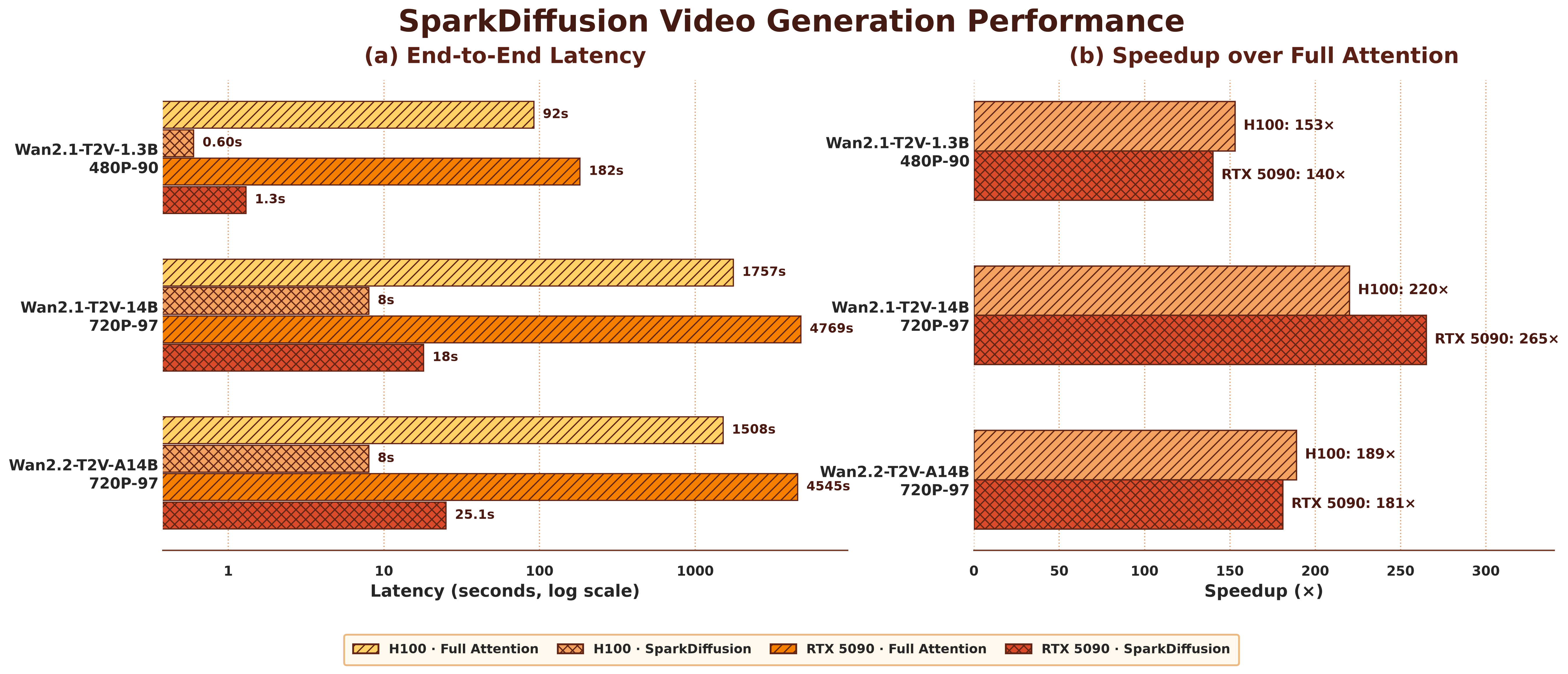}
\caption{
   End-to-end latency and speedup of \method over Full Attention  across Wan video generation models on NVIDIA H100 and RTX 5090 GPUs. Suffixes ``90'' and ``97'' denote attention sparsity levels. Speedup is the latency ratio of the full \method stack (few-step distillation, attention sparsity, and FP8) against the dense baseline; NFE conventions follow footnotes~\ref{fn:fn1}.
}

    \label{fig:sparkdiffusion_latency_speedup_h100_rtx5090}
\end{figure}

\section{Introduction}
\label{sec:intro}

\begin{wrapfigure}{r}{0.50\textwidth}
    \centering
    \vspace{-6pt}
    \includegraphics[width=\linewidth]{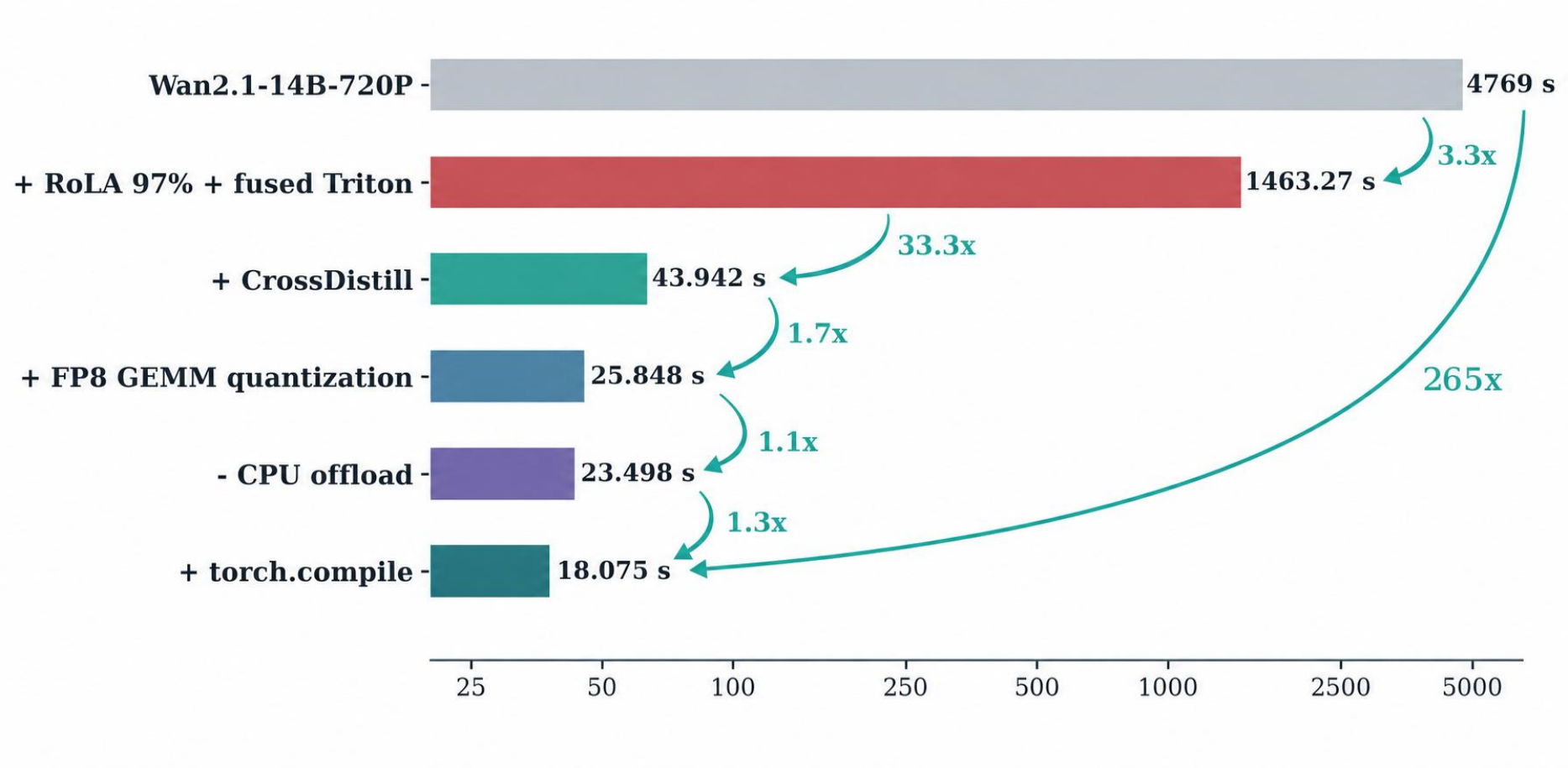}
\caption{
Headline speedup. On Wan2.1-T2V-14B-720P, \method composes three multiplicative factors---3-step CFG-free distillation, $97\%$ compensated sparse attention, and fused FP8 kernels---into a measured $265\times$ end-to-end speedup over the dense baseline (footnotes \ref{fn:fn1}).
}
    \label{fig:headline_speedup}
    \vspace{-8pt}
\end{wrapfigure}

Video diffusion transformers~\citep{peebles2023scalable} are a central architecture for high-quality text-to-video and image-to-video synthesis~\citep{wan2025wan,kong2024hunyuanvideo,yang2025cogvideox}, but their inference cost is dominated by attention over long spatiotemporal sequences at every sampling step.
Sparse attention reduces this cost by skipping redundant query--key interactions, yet conventional designs lose global context and degrade structure when pushed to very high sparsity~\citep{xi2025sparse,yang2026sparse,zhang2025spargeattention}.
Compensated sparse attention mitigates this problem by coupling a high-energy sparse branch with a lightweight low-rank or linear branch~\citep{zhang2026sla,zhang2026sla2,fang2026salad,liu2026ropeslr}, enabling around $90\%$ attention sparsity.

Since block-sparse attention computes only a fraction $1-s$ of query--key blocks, the sparse-branch computation scales roughly linearly with $1-s$.
Increasing sparsity from $90\%$ to $97\%$ reduces the retained sparse blocks from $10\%$ to $3\%$, cutting the block-wise sparse-attention computation to about $30\%$ of that at $90\%$ sparsity.
This reduction translates directly into end-to-end latency gains: on Wan2.1-T2V-14B-720P with 3-step inference, the $90\%\!\to\!97\%$ stretch alone reduces latency from $10.5$\,s to $8.0$\,s on H100 and from $23.7$\,s to $18.0$\,s on RTX~5090 (Fig.~\ref{fig:sparkdiffusion_latency_speedup_h100_rtx5090})---an additional $1.31$--$1.32\times$ speedup that is available only if quality survives extreme sparsity.\footnote{Unless otherwise stated, all Full Attention results in this paper use 50 sampling steps with classifier-free guidance, i.e., two model evaluations per sampling step (NFE${}=50\times2=100$); all accelerated results are CFG-free with 3-step inference (NFE${}=3$). The Full Attention baseline runs in BF16 with the standard high-performance dense attention kernel on each GPU---FlashAttention-2 on RTX~5090 and FlashAttention-3 on H100 \label{fn:fn1}}
However, extreme sparsity also exposes a second barrier: a sparse model may have fast inference and converged step-local training loss, while its terminal videos still exhibit broken structure, semantic drift, mosaic textures, and temporal flicker.
We refer to this regime as the \emph{high-sparsity trap}: a sparse generator can converge under step-local supervision while still failing terminally.

This raises the central question:

\begin{limbox_blue}
\textbf{Can visual generators maintain quality at $90\%+$ attention sparsity, and can such sparsity be converted into measured wall-clock speedup?}
\end{limbox_blue}

\begin{figure}[t]
    \centering
    \includegraphics[width=0.95\linewidth]{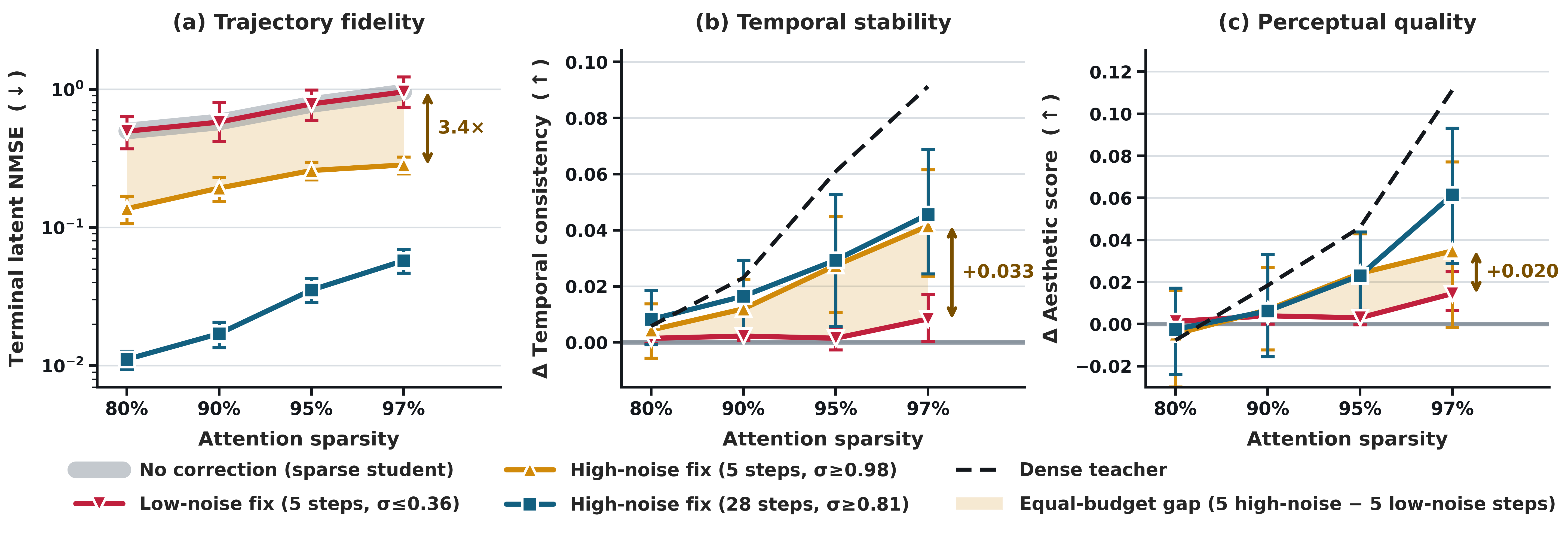}
\caption{
    \textbf{The high-sparsity trap is rooted in high-noise structure errors.}
\emph{Oracle probe}: from the same initial noise, the sparse model's
velocity is replaced by the dense teacher's velocity inside a short window
of sampling steps (``\emph{fix}''; $\sigma$ in the legend is the
flow-matching noise level; Wan2.1-T2V-14B-480P, 50-step sampler,
32 prompts $\times$ 4 seeds; error bars: 95\% CI).
(a)~Terminal error to the dense reference output (latent MSE, $\downarrow$;
the teacher is the zero reference and is off the log scale).
(b,c)~Gains over the uncorrected sparse model ($\uparrow$; dashed: dense
teacher).
At $97\%$ sparsity, fixing the five highest-noise steps removes most of the
terminal error, while fixing the five lowest-noise steps leaves it
essentially unchanged (beige band: equal-budget gap between the two 5-step
fixes); a wider 28-step high-noise window recovers nearly all of it. Both the student and the teacher are sampled with classifier-free guidance, and the intervention replaces the student's guided velocity with the teacher's guided velocity, isolating sparsity-induced error from any CFG-removal effect.}
    \label{fig:headline_high_vs_low_noise}
\end{figure}

Figure~\ref{fig:headline_high_vs_low_noise} provides a mechanistic diagnosis: at extreme sparsity, replacing the sparse student's velocity with the dense teacher's velocity only in a high-noise window substantially reduces the terminal error, whereas the same correction in a low-noise window yields limited gain (setup and details in Sec.~\ref{sec:trap}).

Representation is not the limiting factor at this operating point: video attention exhibits a strong sparse-plus-low-rank structure in which a high-energy sparse branch captures salient semantic interactions and a low-rank branch models the residual background context, so compensated sparse attention can in principle sustain sparsity well beyond $90\%$ on long spatiotemporal sequences~\citep{liu2026ropeslr}.
Once architectural compensation makes such extreme sparsity reachable in representation, the remaining bottleneck is supervision.
Sparse video diffusion models typically inherit flow matching~\citep{lipman2022flow}, which is \emph{step-local}: it supervises velocities at individual noise levels but does not directly constrain the terminal sample.
For dense models, residual velocity errors are often small enough that indirect supervision is benign.
For highly sparse models, however, errors can be larger and more systematic, especially in the high-noise regime, so a model can minimize step-local loss while still producing terminal samples misaligned with the data distribution.

This observation motivates \method, a staged post-training framework.
A short sparse warm-up adapts the dense backbone to the sparse architecture and provides a trainable coarse prior.
Trajectory-mixed distillation then corrects the terminal distribution by applying high-noise structural alignment and low-noise distribution matching.
Finally, fused FP8 deployment converts the reduced computation into measured wall-clock speedup.
The same staged recipe applies to T2V/I2V tasks and to both dense Wan2.1 and MoE-based Wan2.2 backbones (Sec.~\ref{sec:exp}).

\textbf{Contributions.}
\begin{itemize}
    \item \textbf{Diagnosis: the high-sparsity trap.}
    We identify and diagnose the \emph{high-sparsity trap}: at extreme attention sparsity, step-local training keeps reducing the validation loss while terminal generation quality stagnates or degrades.
     Terminal-aligned post-training mitigates the trap where substantially extended step-local training cannot.
    \item \textbf{Framework: \method.}
    We build \method, a unified acceleration framework for visual generation that chains three complementary stages into a single pipeline: a short sparse warm-up that adapts a pretrained dense backbone into a coarse sparse prior, few-step trajectory-mixed distillation that corrects the terminal distribution, and FP8 quantization with fused kernels that converts the saved computation into measured wall-clock speedup.
    \item \textbf{Results.}
 \method sustains $97\%$ attention sparsity on long-sequence 720P generation with strong visual quality: on Wan2.1-T2V-14B-720P it delivers a $265\times$ end-to-end speedup on a single RTX~5090 and $220\times$ on H100, and it generates Wan2.1-T2V-1.3B-480P video end-to-end in $1.3$ seconds at $90\%$ sparsity.
    The framework covers Wan2.1/Wan2.2 backbones, T2V/I2V tasks, 480P/720P resolutions, and both RTX~5090 and H100 GPUs.
\end{itemize}

\textbf{Extension to auto-regressive video diffusion.}
The framework is also compatible with auto-regressive (AR) video diffusion~\cite{yin2025slow}.
Compensated sparse attention can be restricted to a causal window.
Trajectory-mixed distillation acts on the noise axis rather than the temporal axis, so it can be combined with self-forced AR training~\cite{huang2026self}.
Since high-noise structural errors can propagate across chunks, terminal alignment is arguably even more important in AR generation.

\section{Preliminaries}
\label{sec:preliminaries}

The following ingredients are central to our discussion: flow-matching training, block-sparse attention, and compensated sparse attention.

\textbf{Flow matching.} Following flow matching~\citep{lipman2022flow}, we consider a video diffusion transformer that predicts a velocity field $F_\theta(x,t)$ on latent tokens $x \in \mathbb{R}^{L \times d}$, where $L$ is the spatiotemporal token length.
Let $F_{\theta_0}$ denote a pretrained dense checkpoint.
Given a clean sample $x_0 \sim p_{\mathrm{data}}$ and Gaussian noise $\epsilon \sim \mathcal{N}(0,I)$, flow matching forms the interpolated noisy sample
\begin{equation}
x_t = (1-t)x_0 + t\epsilon,
\qquad t \in [0,1],
\end{equation}
and trains the model to predict the path velocity $\epsilon - x_0$:
\begin{equation}
\label{eq:fm-loss}
\mathcal{L}_{\mathrm{FM}}(\theta)
=
\mathbb{E}_{x_0,\epsilon,t}
\left[
w(t)
\left\|
F_\theta(x_t,t) - (\epsilon - x_0)
\right\|_2^2
\right],
\end{equation}
where $w(t)$ is an optional timestep weighting function.
This objective supervises velocities at sampled noise levels; the terminal sample $\hat{x}_0$ is obtained only after composing many sampling steps.

\textbf{Block sparsity.}
Throughout this paper, \emph{attention sparsity} refers to \emph{block sparsity}: the attention matrix is partitioned into contiguous blocks along query and key dimensions, and a fraction $s$ of these blocks are skipped during computation.
We report sparsity as the fraction of blocks not evaluated by the sparse branch.

\textbf{Compensated sparse attention.}
A major cost in a video DiT is dense self-attention,
\[
A(Q,K,V)
=
\operatorname{softmax}\!\left(\frac{QK^\top}{\sqrt{d}}\right)V,
\]
which costs $O(L^2)$.
Compensated sparse attention replaces it by
\begin{equation}
\label{eq:comp-sparse}
\widetilde{A}(Q,K,V)
=
A_M(Q,K,V)
+
g_\gamma \odot C_\psi(Q,K,V).
\end{equation}
Here $A_M$ is a sparse branch that computes attention on a fraction $1-s$ of query-key blocks selected by $M_\phi$, e.g., via top-$k$ selection, blockwise scores, or fixed spatiotemporal patterns.
$C_\psi$ is a lightweight compensation branch, such as linear/low-rank attention or pooled summaries, designed to recover global context discarded by the sparse branch.
Gate $g_\gamma$ fuses the two branches and is typically initialized near zero.

\textbf{Training sparse attention.}
Starting from the dense checkpoint $F_{\theta_0}$, we insert the compensated sparse attention modules into the backbone and train the resulting model with the same task loss as in \eqref{eq:fm-loss}.
Some variants replace the target $\epsilon - x_0$ with the dense teacher prediction $F_{\theta_0}(x_t,t)$, but the objective remains a step-local velocity-regression loss.

\section{The high-sparsity trap}
\label{sec:trap}

We use the term \emph{high-sparsity trap} to describe a specific failure regime at extreme sparsity.
Mitigating the trap unlocks the largest remaining latency gain in sparse attention (Sec.~\ref{sec:intro}).

\begin{limbox}
\textbf{The high-sparsity trap.}
We say that a sparse video DiT falls into the high-sparsity trap when, under a compensated sparse architecture, increasing attention sparsity beyond a threshold causes terminal generation quality to stagnate or degrade while the step-local validation loss remains low or continues to decrease, and the terminal error persists even under substantially extended step-local training budgets.
\end{limbox}
\textbf{Generality across sparse designs.}
RoLA~\citep{zhang2026rola} shares its sparse branch with representative trainable sparse-attention methods such as VSA~\citep{zhang2026faster} and SLA~\citep{zhang2026sla}: all three compute exact softmax attention on a retained subset of query--key blocks and differ only in the compensation mechanism.
The high-sparsity trap observed on RoLA therefore reflects this whole family: Appendix~\ref{app:generality} reproduces the same failure pattern on VSA- and SLA-style designs, and the staged recipe of Sec.~\ref{sec:method} restores terminal quality on both.

Operationally, we identify this regime by three signatures:
(i) the sparse model's step-local validation loss has plateaued or continues to decrease;
(ii) terminal generation quality, measured by automatic metrics or paired terminal error, is significantly worse than the dense or lower-sparsity baseline;
and (iii) extending step-local training does not substantially recover the terminal quality.

\textbf{Terminal error.}
Throughout this section, the \emph{terminal error} denotes the paired MSE between the sparse and dense models' terminal outputs under matched initial noise; unless otherwise stated, it is measured in latent space.
Both models use identical settings---50 steps with classifier-free guidance (CFG scale $5.0$)---so the terminal error reflects sparsity-induced error alone, not CFG removal or step reduction.

\subsection{Empirical signature: terminal drift at extreme sparsity}
\begin{wraptable}{r}{0.47\linewidth}
    \vspace{-14pt}
    \centering
    \scriptsize

    \caption{
    Terminal error (paired latent MSE) at $97\%$ sparsity with rank-64 vs.\ full-rank ($r{=}128$) compensation
    (Wan2.1-T2V-14B-480P, 10{,}000 steps).
    }
    \label{tab:fullrank}

    \begin{tblr}{
        width=\linewidth,
        colspec={Q[l,wd=2.80cm] X[c] X[c]},
        colsep=1.8pt,
        rows={rowsep=2.2pt},
        row{1}={bg=SparkDiffusionHeader,font=\bfseries},
        hline{1}={1.25pt,SparkDiffusionRed},
        hline{2}={0.7pt,SparkDiffusionRed},
        hline{4}={1.25pt,SparkDiffusionRed}
    }

    Compensation
    & Term.\ error $\downarrow$
    & VBench $\uparrow$
    \\

    Rank-64 (default)
    & 0.124
    & 80.92
    \\ 

    Full rank ($r{=}128$)
    & 0.121
    & 81.05
    \\ 

    \end{tblr}

    \vspace{-10pt}
\end{wraptable}
High sparsity is not ruled out by architecture alone: video attention exhibits a strong
sparse-plus-low-rank structure that supports 90\%+ sparsity~\cite{liu2026ropeslr}, and indeed
the 80\% and 90\% models remain close to the dense baseline under the same training
(Figure~\ref{fig:wan21_sparsity_crash_main_step2000}); the failure appears only when the same architecture
family is pushed to 95\% and 97\%. Two observations identify the training signal,
rather than the representation, as the bottleneck. 
First, widening the compensation
branch from rank-64 to full rank leaves the terminal error essentially unchanged
(Table~\ref{tab:fullrank}): the model can already express far richer
corrections than the ones it converges to. Second, the objective it is given is still
being optimized: the step-local validation loss continues to decrease while terminal
quality stalls (Figure~\ref{fig:wan21_reference_style_matrix}).

\textbf{Evidence in a controlled setting.}
On toy 2D sequence manifolds, a $95\%$ sparse model trained until validation loss plateaus still exhibits structural deviations introduced in the high-noise stage; these deviations become more visible as denoising proceeds (Figure~\ref{fig:wrap}).
Figure~\ref{fig:mainfld_showcase} shows the qualitative counterpart.

\begin{wrapfigure}{R}{0.52\textwidth}
    \centering
    \includegraphics[width=0.45\textwidth]{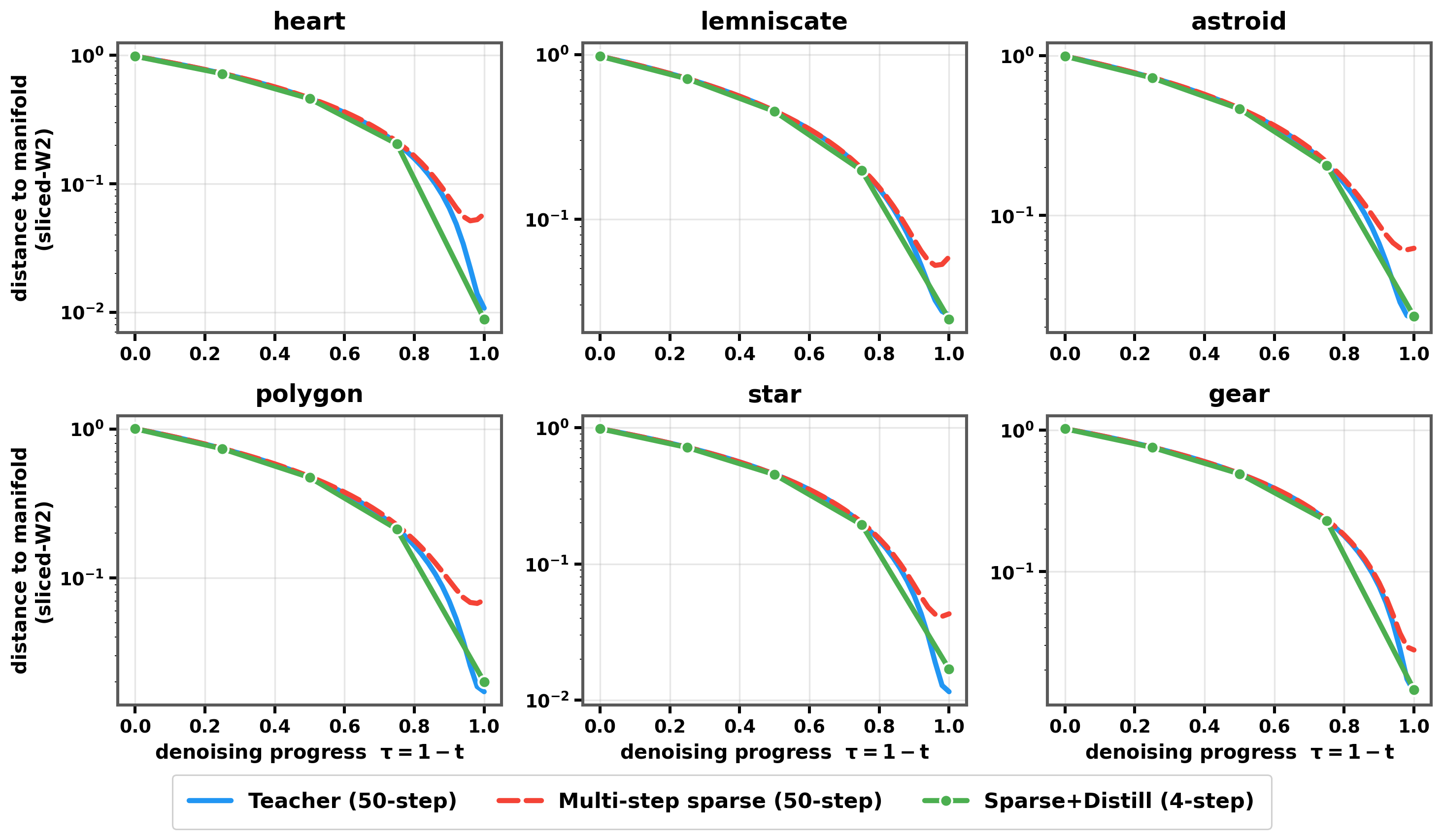}
    \caption{
    Sliced-$W_2$ distance to the data distribution along the denoising trajectory on six toy 2D sequence manifolds.
    The sparse model is trained until validation loss plateaus, while the distilled student is obtained by trajectory-mixed distillation.
    }
    \label{fig:wrap}
\end{wrapfigure}

\textbf{Real-video evidence under controlled training budgets.}
On a real video backbone, all sparse models are trained with the same step-local task
loss~\eqref{eq:fm-loss} from the same dense checkpoint on an enlarged training set of
$\approx$30{,}000 videos ($15\times$ the production warm-up set), 
and training is deliberately
extended to 10{,}000 steps---$40\times$ the step-local budget of the production warm-up
($\approx$250 steps), and more than the total step count of the full two-stage pipeline
of Sec.\ref{sec:method} (8{,}250 steps; Tables~\ref{tab:stage1_training_config}
and~\ref{tab:stage2_training_config}).


\begin{figure}[t]
    \centering
    \includegraphics[width=0.95\linewidth]{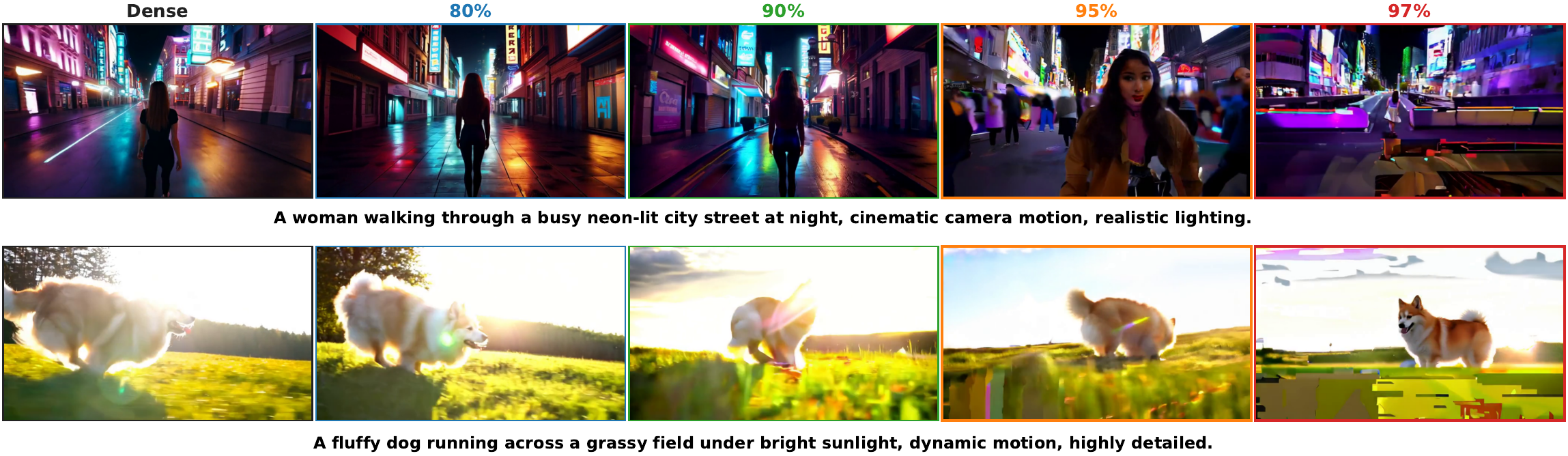}
    \caption{
           Video frame comparison on Wan2.1-T2V-14B-480P under attention sparsity levels
       $80\%$, $90\%$, $95\%$, and $97\%$.
    }
    \label{fig:wan21_sparsity_crash_main_step2000}
\end{figure}

\textbf{Where the error is injected.}
Figure~\ref{fig:headline_high_vs_low_noise} localizes the terminal error with an oracle intervention: for a fixed initial noise, the sparse student's velocity is replaced by the dense teacher's velocity on a contiguous noise interval, while the sparse student is kept elsewhere.
At $97\%$ sparsity, correcting the five highest-noise steps removes most of the terminal error, whereas the equal-budget low-noise correction leaves it essentially unchanged; a wider high-noise window recovers nearly all of it.
Structural errors are therefore injected during high-noise structure generation and then amplified by later steps.

\begin{figure}[t]
    \centering
    \includegraphics[width=0.85\linewidth]{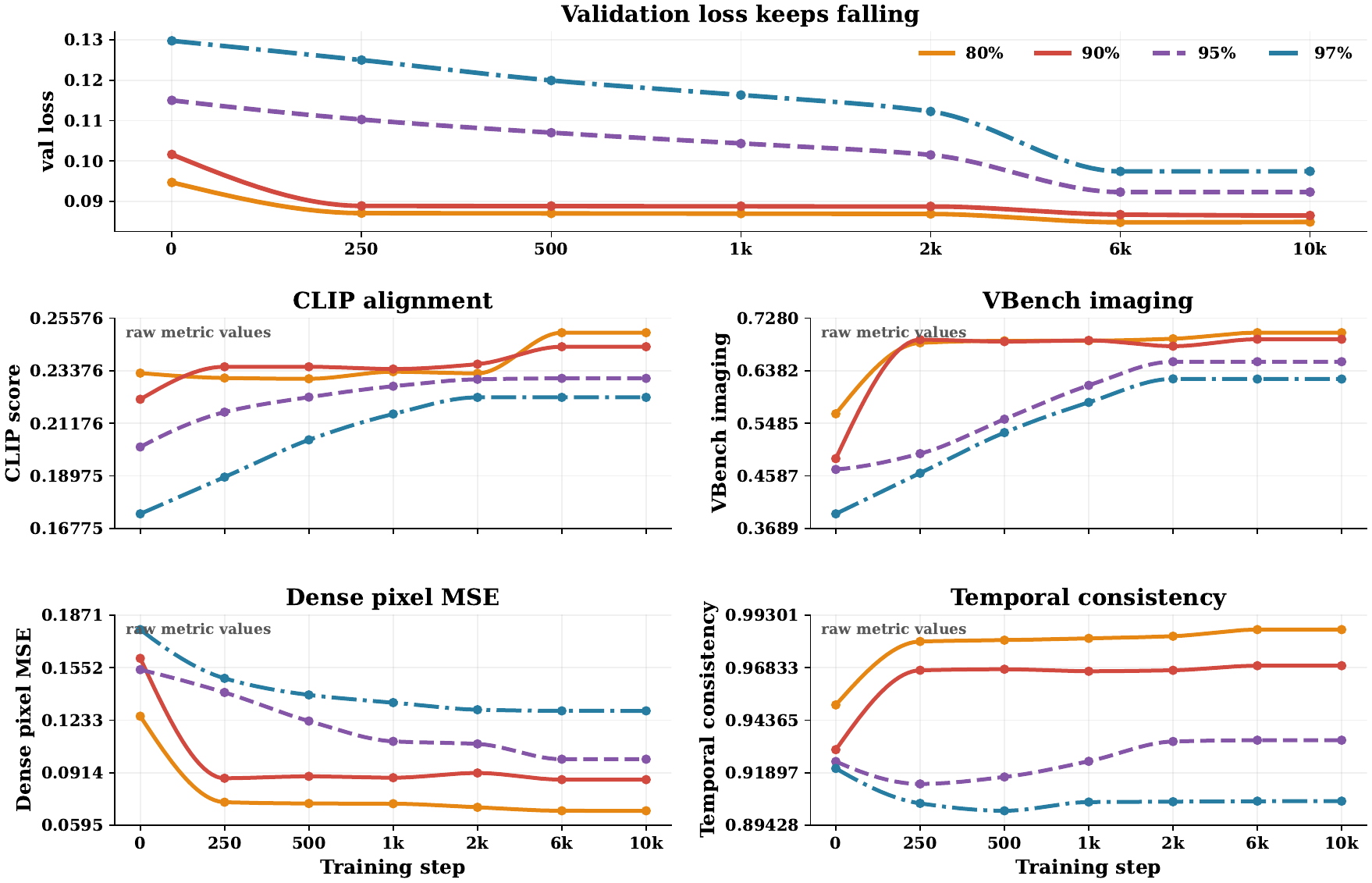}
    \caption{
   Validation loss and raw quality metrics for Wan2.1-14B across sparsity levels.
   The ``Dense pixel MSE'' panel reports the terminal error of Sec.\ref{sec:trap} after VAE decoding, sharing the pixel-space footing of the other quality panels.
    }
    \label{fig:wan21_reference_style_matrix}
\end{figure}
Together, these diagnostics show that step-local validation loss is a poor proxy for terminal quality at extreme sparsity.

\subsection{Why step-local supervision may be insufficient}

Intuitively, the step-local loss measures per-step error magnitudes, whereas the terminal sample depends on how those errors are transported and summed along the sampling trajectory.
Appendix~\ref{sec:insight_terminal_misalignment} formalizes this mechanism in a deliberately simplified surrogate.
After a step-local optimum, a terminal-aligned signal can reduce the surviving terminal error at first order through the same trainable parameters, but it cancels the terminal-visible projection of the error rather than removing the underlying velocity error.

\subsection{From diagnosis to a staged remedy}
We call a training signal \emph{terminal-aligned} if its supervision target at noise
level $t$ is constructed from the sampling trajectory \emph{beyond} $t$---the model's
prediction at a later trajectory state, a teacher-composed trajectory segment, or
the terminal output itself---so that the loss constrains how the prediction at $t$
\emph{composes} with the remainder of the trajectory toward the terminal state.
In contrast, step-local losses such as flow matching supervise the velocity at
$(x_t, t)$ against a closed-form, pointwise target that is independent of the sampling
trajectory, and are therefore agnostic to how per-step errors compose and accumulate
(Appendix~\ref{sec:insight}).
Under this definition, consistency objectives are terminal-aligned toward the terminal
state of their noise interval, and distribution-matching objectives are terminal-aligned
toward the global terminal output $\hat{x}_0$.
Sparse training with step-local loss provides a coarse prior but does not anchor
the terminal distribution at extreme sparsity.
Applying few-step distillation post-training to the sparse model with this prior is therefore a natural way to mitigate the high-sparsity trap.
The remedy is staged: a short sparse warm-up adapts the high-sparsity architecture into
a usable prior, and trajectory-mixed distillation (Sec.~\ref{sec:method}) then aligns
the teacher's velocity field in the high-noise segment and matches the data distribution
in the low-noise segment.

Figure~\ref{fig:wan21_distill_comparison} provides qualitative evidence: at $97\%$ attention sparsity on Wan2.1-T2V-14B-720P, the few-step model after trajectory-mixed distillation is visually much closer to the dense teacher than the multi-step sparse model.

\begin{figure}[h]
    \centering
    \includegraphics[width=0.85\linewidth]{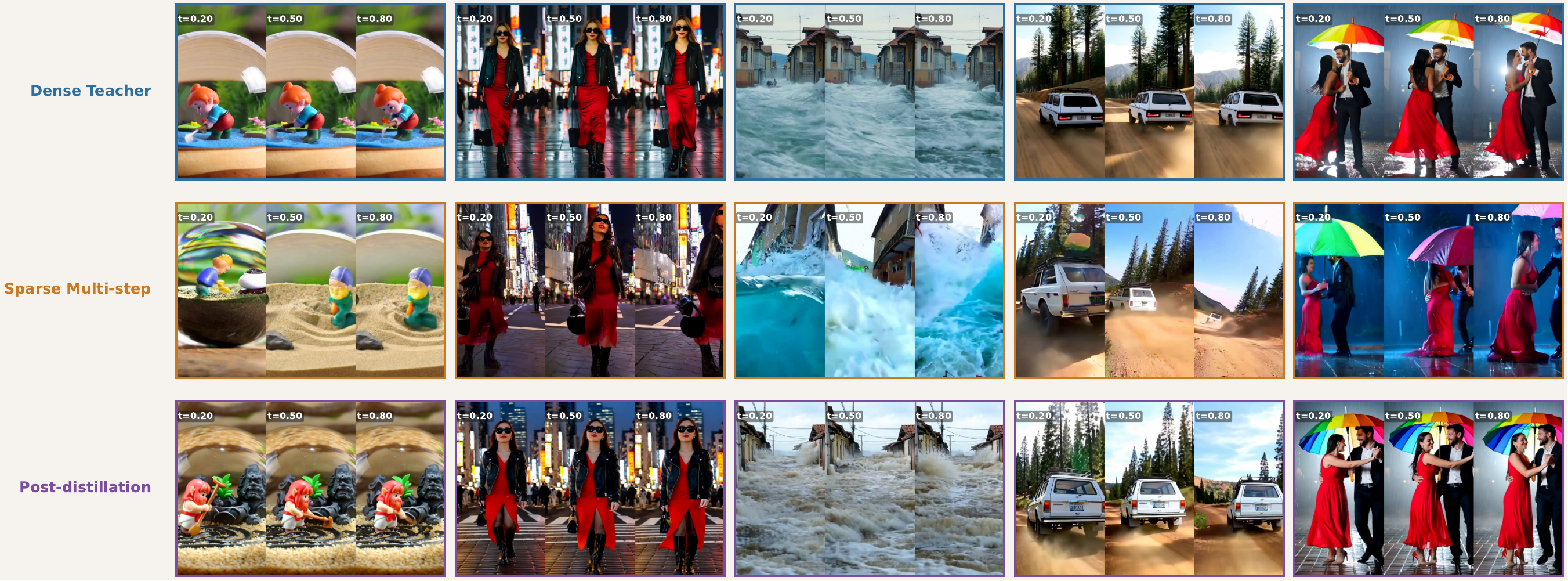}
    \caption{
        Qualitative comparison at $97\%$ attention sparsity on Wan2.1-T2V-14B-720P.
        We compare the dense teacher, the multi-step sparse model, and the few-step sparse model obtained after trajectory-mixed distillation.
    }
    \label{fig:wan21_distill_comparison}
\end{figure}

\section{Method}
\label{sec:method}

\method consists of two post-training stages and one deployment-time step.
Figure~\ref{fig:framework} gives an overview of the framework.
Together, they convert a pretrained dense video DiT into a fast, high-sparsity generator without retraining it from scratch.
Stage~1 performs a sparse-attention warm-up to obtain a coarse prior (Sec~\ref{subsec:sparse});
Stage~2 applies trajectory-mixed distillation to reduce sampling steps and correct the terminal distribution (Sec~\ref{subsec:distill});
Stage~3 uses FP8 quantization to convert reduced computation into wall-clock speedup (Sec~\ref{subsec:fp8}).
Each stage consumes the checkpoint of the previous one; the full schedule is in Sec~\ref{subsec:strategy}.

The framework is agnostic to the specific sparse selector, compensation branch, and distillation objective.
Our default instantiation uses RoLA~\citep{zhang2026rola} as the sparse module and the CrossDistill schedule~\citep{liu2026crossdistill} for distillation.

\begin{figure}[t]
    \centering
    \includegraphics[width=0.8\linewidth]{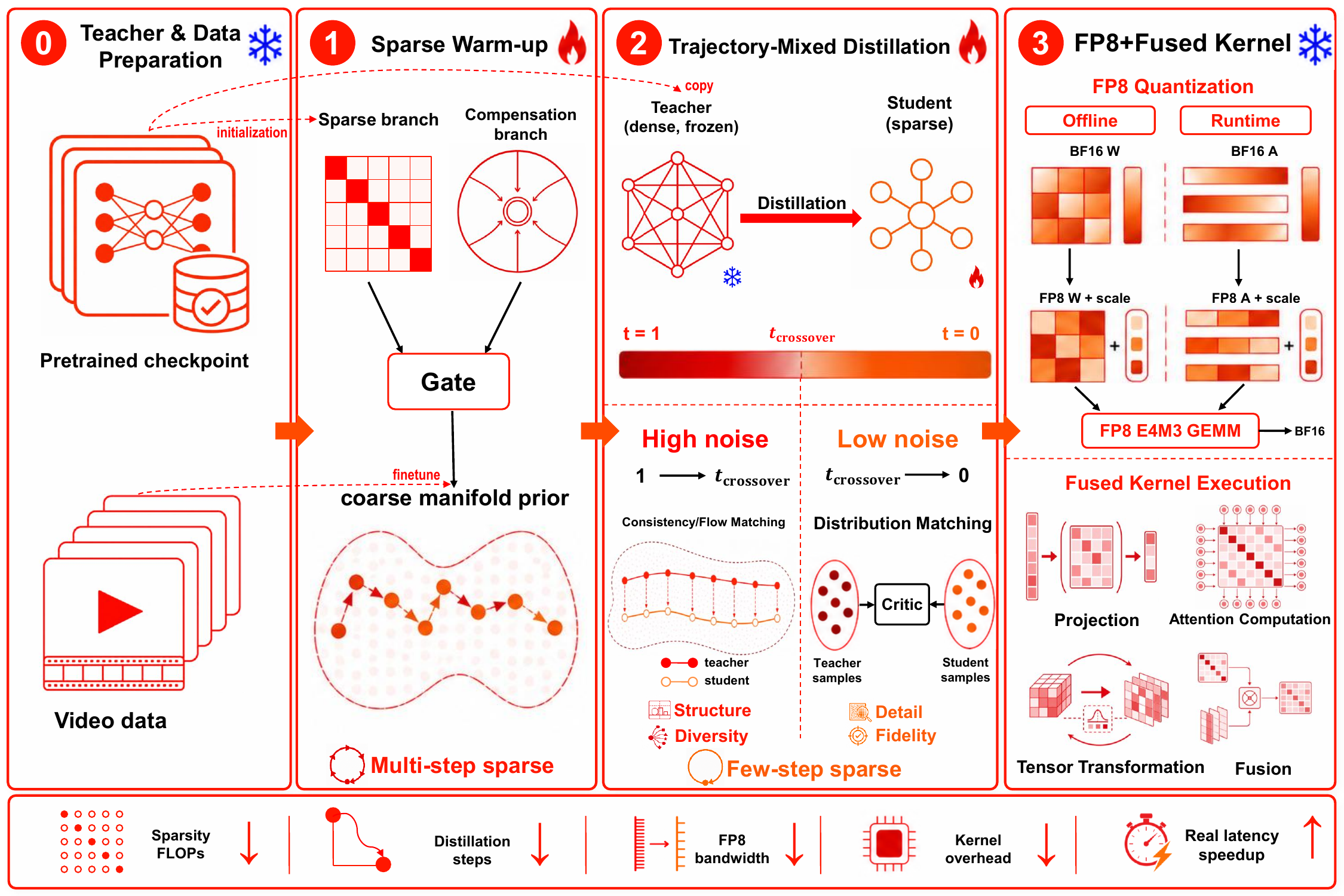}
    \caption{
   Overview of the \method framework (stages detailed in Sec.~\ref{subsec:sparse}--\ref{subsec:fp8}).
    }
    \label{fig:framework}
\end{figure}

\subsection{Compensated sparse attention}
\label{subsec:sparse}

Our sparse attention module follows RoLA~\citep{zhang2026rola}.
A block-sparse branch retains high-energy query--key blocks, while a low-rank branch recovers the global context discarded by sparsification.
The sparse branch computes
\begin{equation}
O_s = A_M(Q,K,V),
\end{equation}
where $A_M$ keeps only a small set of selected query--key blocks.

The compensation branch uses the backbone's pre-attention normalized queries and keys.
It applies low-rank projections, a pointwise nonlinearity, and then a rank-truncated 3D RoPE rotation:
\begin{equation}
\tilde q_i = R_r(p_i)\,\phi(P_q q_i^{\mathrm{norm}}),
\qquad
\tilde k_j = R_r(p_j)\,\phi(P_k k_j^{\mathrm{norm}}),
\qquad
C = \sum_j \tilde k_j v_j^\top,
\qquad
O_{lr,i} = \operatorname{RMSNorm}(\tilde q_i^\top C),
\end{equation}
where $\phi$ is SiLU, and $R_r(\cdot)$ reuses the first $r$ rotary coordinates of the pretrained 3D RoPE schedule.
This branch is linear in sequence length and preserves relative spatiotemporal position.

A token-wise gate $g_i$ initialized near zero fuses the two branches:
\begin{equation}
O_i = O_{s,i} + g_i\, r_i\, O_{lr,i},
\end{equation}
where $r_i$ is the RMS of the sparse output $O_{s,i}$.
Thus the module starts close to the sparse branch and opens the compensation path only where needed.

\subsection{Trajectory-mixed distillation}
\label{subsec:distill}

Stage-2 distillation must meet two distinct requirements.
First, escaping the high-sparsity trap requires a terminal-aligned signal (Sec.~\ref{sec:trap}); consistency-style and distribution-matching objectives are both terminal-aligned, so either suffices in principle (Sec.~\ref{subsec:exp_objective_ablation}).
Second, the choice among them is governed by the fidelity--diversity trade-off: pure distribution matching is mode-seeking and loses seed-level diversity~\citep{chen2026data}, while pure consistency matching preserves the teacher's trajectory but never aligns the terminal distribution.

We therefore adopt the trajectory-mixed schedule of CrossDistill~\citep{liu2026crossdistill}, which splits the sampling trajectory at a noise crosspoint: the high-noise segment is trained with PCM-style\cite{wang2024phased} consistency to preserve structure, motion, and diversity, while the low-noise segment is trained with DMD-style\cite{yin2024one} distribution matching to sharpen detail and correct terminal-visible errors.
We use a 3-step student---one high-noise PCM step and two low-noise DMD steps---and follow the default crosspoint, loss weighting, and optimization schedule of CrossDistill.
This assignment mirrors the diagnosis of Sec.~\ref{sec:trap}: terminal-visible errors originate in high-noise structure generation (Fig.~\ref{fig:headline_high_vs_low_noise}), so the high-noise step stays anchored to the teacher's coarse trajectory while the low-noise steps supply the terminal-aligned correction.

\subsection{FP8 quantization}
\label{subsec:fp8}

The deployment step converts the sparse model's reduced FLOPs into wall-clock speedup.
We quantize the linear projections in attention and the feed-forward layers to W8A8 FP8 (E4M3)~\citep{shen2024efficient}: weights carry static per-channel scales calibrated once offline, and activations are scaled dynamically per token at runtime.
Normalization layers, gating operations, and sparse mask selection remain in BF16.
The per-token activation quantization path is fused into a single custom kernel so that scaling and casting happen in one pass over memory.
This transform is applied purely at deployment time, after all training is complete, and with few-step sampling the quantization error has little room to accumulate.
All \method quality numbers reported in this paper (Tables~\ref{tab:main_vbench} and~\ref{tab:diversity}) are measured on the deployed model with FP8 quantization and fused kernels enabled; the BF16-to-FP8 quality delta is ablated in Appendix~\ref{app:fp8_ablation} and stays within $0.1$ VBench points.

\subsection{Training and deployment}
\label{subsec:strategy}
\begin{wrapfigure}{r}{0.50\textwidth}
    \centering
    \vspace{-12pt}
    \includegraphics[width=\linewidth]{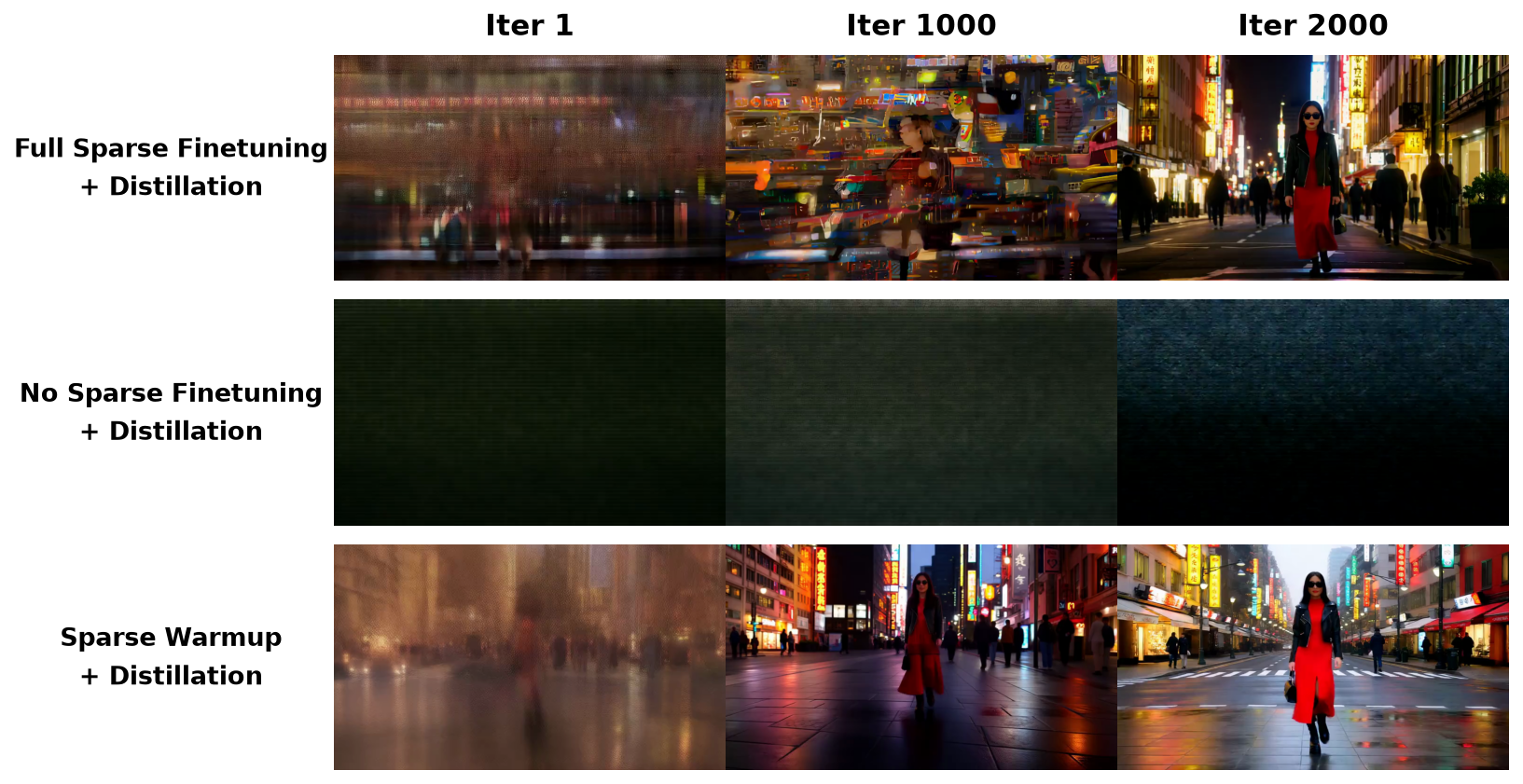}
    \vspace{-18pt}
    \caption{
        \textbf{Sparse warm-up provides the coarse prior for terminal-aligned distillation.}
        On Wan2.1-T2V-14B-480P, all models are followed by the same trajectory-mixed distillation stage; a short sparse warm-up establishes the usable coarse prior on which distillation builds.
    }
    \label{fig:warm_up_ablation}
    \vspace{-10pt}
\end{wrapfigure}

\textbf{Stage 1 (sparse warm-up).}
We insert the compensated sparse attention of Sec~\ref{subsec:sparse} into the dense backbone and fine-tune briefly with the step-local objective of Sec.~\ref{sec:preliminaries} (Eq.~\eqref{eq:fm-loss}, including its teacher-velocity variant) alone.
This stage adapts the backbone to the sparse architecture and establishes a coarse generative prior for the terminal-aligned stage that follows.
Figure~\ref{fig:warm_up_ablation} shows that a brief warm-up is sufficient to establish a usable prior for distillation, so we keep this stage short.

\textbf{Stage 2 (trajectory-mixed distillation).}
Starting from the warm-up checkpoint, we apply the trajectory-mixed distillation of Sec.~\ref{subsec:distill}, with the dense multi-step model frozen as the teacher.

\textbf{Deployment.}
The few-step model is quantized with the FP8 scheme of Sec~\ref{subsec:fp8} and used directly for inference.
For mixture-of-experts backbones such as Wan2.2, where separate expert groups are specialized for different noise ranges, each expert group is warmed up and distilled on its own noise range before quantization.

\section{Experiments}
\label{sec:exp}

\subsection{Main quantitative comparison}
\label{subsec:exp_main_table}

Table~\ref{tab:main_vbench} compares Full Attention, TurboDiffusion, FastWan (VSA), and \method under the same hardware and generation settings.\footnote{TurboDiffusion and FastWan provide official weights only for Wan2.1 backbones; no official release supports Wan2.2-T2V-A14B, so these baselines are omitted for that model. FastWan is the few-step sparse-attention video model implemented and released in the official VSA repository~\citep{zhang2026faster}: VSA denotes the training method and FastWan the released few-step sparse model. We therefore treat them as the same baseline and denote it as FastWan (VSA) throughout this paper.}
All baselines are evaluated at their official released operating point ($90\%$ sparsity); we report \method both at matched $90\%$ and at $97\%$.
At matched $90\%$ sparsity, \method improves over all baselines on every reported metric on Wan2.1-T2V-14B-720P.
On the compact short-sequence Wan2.1-T2V-1.3B-480P, \method at the same $90\%$ sparsity improves generation quality and inference speed simultaneously over both baselines: it attains the best VBench-2.0 at the lowest latency ($1.3$\,s on RTX~5090 and $0.6$\,s on H100, $1.5$--$2.2\times$ faster than the baselines).

On the large long-sequence models, we push sparsity to $97\%$.
All baselines are evaluated at their strongest official configuration ($90\%$ sparsity); since quality degrades monotonically with sparsity (Fig.~\ref{fig:wan21_sparsity_crash_main_step2000}), training the baselines at $97\%$ could only widen the quality gap in our favor.
At $97\%$, \method attains slightly better aggregate quality than the strongest $90\%$ baselines (on Wan2.1-T2V-14B-720P, VBench and VBench-2.0 relative to TurboDiffusion) at clearly lower latency, while remaining closest to the dense model in diversity (Table~\ref{tab:diversity}).
Measured end-to-end diffusion generation latency, excluding the text encoding
and VAE decoding stages and speedup are shown in Figures~\ref{fig:sparkdiffusion_latency_speedup_h100_rtx5090} and~\ref{fig:headline_speedup}.
These speedups are combined system results in which few-step distillation, attention sparsity, and FP8 contribute multiplicatively (Fig.~\ref{fig:headline_speedup}).
At matched 3-step inference, the isolated $90\%\!\to\!97\%$ sparsity stretch contributes $1.31$--$1.32\times$; this is the regime where step-local-only recipes degrade (Sec.~\ref{sec:trap}).

\begin{table}[t]
\centering
\scriptsize

\caption{
Comparison of generation quality and efficiency on VBench~\citep{huang2024vbench} and VBench-2.0~\citep{zheng2025vbench}.
We evaluate Wan2.1-T2V-1.3B at $480\times832$, and Wan2.1-T2V-14B and Wan2.2-T2V-A14B at $720\times1280$, all with 81 frames.
VBench-2.0 reports the overall score and five capability categories: Creativity, Commonsense, Controllability, Human Fidelity, and Physics.
``Sp.'' denotes attention sparsity; NFE conventions follow footnotes~\ref{fn:fn1}.
All \method rows are measured with the full deployment stack active---W8A8 FP8 quantization with fused kernels---for both quality metrics and latency; baselines are evaluated with their official inference pipelines, including their own quantization settings. Full Attention rows use our strongest dense implementation: BF16 with FlashAttention-2 on RTX~5090 and FlashAttention-3 on H100, and no other acceleration technique.
For Wan2.2-T2V-A14B-720P, the RTX 5090 latency includes the overhead of swapping the high-noise and low-noise expert groups in GPU memory; on H100 both expert groups reside in memory simultaneously.
\textbf{Bold} marks the best result among accelerated methods within each model block.
}

\label{tab:main_vbench}

\begin{tblr}{
width=\linewidth,
colspec={Q[c,wd=2.00cm] Q[c,wd=3.20cm] *{7}{X[c]} Q[c,wd=0.60cm] Q[c,wd=0.65cm] Q[c,wd=0.65cm]},
colsep=1.8pt,
rows={rowsep=2.2pt},
row{1,2}={bg=SparkDiffusionHeader,font=\bfseries},
cell{3}{2-12}={fg=gray},
cell{7}{2-12}={fg=gray},
cell{12}{2-12}={fg=gray},
cell{6}{2-12}={bg=SparkDiffusionLight},
cell{10,11}{2-12}={bg=SparkDiffusionLight},
cell{13,14}{2-12}={bg=SparkDiffusionLight},
cell{6,10,11,13,14}{2}={fg=SparkDiffusionRed,font=\bfseries},
hline{1}={1.25pt,SparkDiffusionRed},
hline{2}={3}{0.7pt,SparkDiffusionRed,r=0.75},
hline{2}={4-9}{0.7pt,SparkDiffusionRed,l=0.75},
hline{2}={11-12}{0.7pt,SparkDiffusionRed},
hline{3}={0.7pt,SparkDiffusionRed},
hline{7}={0.7pt,SparkDiffusionOrange},
hline{12}={0.7pt,SparkDiffusionOrange},
hline{15}={1.25pt,SparkDiffusionRed}
}

& & \SetCell[c=1]{c} VBench & \SetCell[c=6]{c} VBench-2.0 & & & & & & &  \SetCell[c=2]{c} Latency (s) & \\

Model
& Method
& Total $\uparrow$
& Total $\uparrow$
& Creat. $\uparrow$
& Common. $\uparrow$
& Control. $\uparrow$
& {\textbf{Human}\\[-1pt]\textbf{Fid.} $\uparrow$}
& Physics $\uparrow$
& Sp.
& 5090 $\downarrow$
& H100 $\downarrow$
\\

\SetCell[r=4]{c,m}{
\textbf{Wan2.1-T2V}\\[-1pt]
\textcolor{SparkDiffusionGray}{\footnotesize 1.3B}
}
& Full
& 83.21
& 56.02
& 54.73
& 57.38
& 34.96
& 78.71
& 54.30
& $0\%$
& 182
& 92
\\

& FastWan (VSA) & 82.37 & 54.63 & 52.66 & 56.52 & 32.27 & 79.69 & 52.01 & $90\%$ & 2.8 & 1.2 \\

& TurboDiffusion & {82.52} & 54.61 & 51.72 & 56.94 & 31.65 & 79.55 & 53.21 & $90\%$ & 2.0 & 1.0 \\

& SparkDiffusion (Ours) & \best{82.64} & \best{55.85} & \best{53.99} & \best{57.13} & \best{33.75} & \best{80.34} & \best{54.02} & $90\%$ & \best{1.3} & \best{0.6} \\

\SetCell[r=5]{c,m}{
\textbf{Wan2.1-T2V}\\[-1pt]
\textcolor{SparkDiffusionGray}{\footnotesize 14B}
}
& Full
& 83.69
& 60.20
& 55.25
& 63.98
& 37.32
& 81.60
& 62.84
& $0\%$
& 4769
& 1757
\\

& FastWan (VSA) & 82.72 & 58.04 & 54.97 & 59.54 & 35.75 & 81.72 & 58.20 & $90\%$ & 54.1 & 20.5 \\

& TurboDiffusion & 82.88 & 57.98 & 54.88 & 59.77 & 34.98 & 81.51 & 58.74 & $90\%$ & 25.3 & 16.0 \\

& SparkDiffusion (Ours) & \best{83.42} & \best{59.36} & \best{55.22} & \best{60.88} & \best{37.41} & \best{83.51} & \best{59.77} & $90\%$ & 23.7 & 10.5 \\

& SparkDiffusion (Ours) & 83.15 & 58.05 & 54.81 & 59.02 & 36.01 & 81.77 & 58.63 & $97\%$ & \best{18.0} & \best{8.0} \\

\SetCell[r=3]{c,m}{
\textbf{Wan2.2-T2V}\\[-1pt]
\textcolor{SparkDiffusionGray}{\footnotesize A14B}
}
& Full
& 84.21
& 60.36
& 55.60
& 64.50
& 37.40
& 81.30
& 63.00
& $0\%$
& 4545
& 1508
\\
& SparkDiffusion (Ours) & \best{83.75} & \best{59.77} & \best{55.51} & \best{61.36} & \best{37.55} & \best{83.88} & \best{60.53} & $90\%$ & 32.4 & 10.5 \\
& SparkDiffusion (Ours) & 83.36 & 58.46 & 55.09 & 59.32 & 35.67 & 81.92 & 60.30 & $97\%$ & \best{25.1} & \best{8} \\

\end{tblr}

\end{table}

\subsection{Diversity preservation under few-step distillation}
\label{subsec:exp_diversity}

\begin{table}[t]

\centering
\scriptsize

\caption{
Quantitative diversity comparison on Wan2.1-T2V-14B-720P.
Each video is encoded by two frozen video encoders, V-JEPA 2~\citep{assran2025v} and VideoMAE V2~\citep{wang2023videomaev2}; we report the average pairwise cosine and $\ell_2$ distances among $N{=}5$ videos sampled from the same prompt with 5 noise seeds ($1{,}000$ prompts; protocol follows~\citet{shaul2026pdd}).
}
\label{tab:diversity}

\begin{tblr}{
width=\linewidth,
colspec={Q[l,wd=3.25cm] Q[c,wd=0.55cm] *{4}{X[c]}},
colsep=1.8pt,
rows={rowsep=2.2pt},
row{1,2}={bg=SparkDiffusionHeader,font=\bfseries},
cell{3}{1-6}={fg=gray},
cell{6}{1-6}={bg=SparkDiffusionLight},
cell{6}{1}={fg=SparkDiffusionRed,font=\bfseries},
hline{1}={1.25pt,SparkDiffusionRed},
hline{2}={3-4}{0.7pt,SparkDiffusionRed,r=0.75},
hline{2}={5-6}{0.7pt,SparkDiffusionRed,l=0.75},
hline{3}={0.7pt,SparkDiffusionRed},
hline{7}={1.25pt,SparkDiffusionRed}
}

& &
\SetCell[c=2]{c} V-JEPA 2
& &
\SetCell[c=2]{c} VideoMAE V2
& \\

Method
& Sp.
& Cos $\uparrow$
& $\ell_2$ $\uparrow$
& Cos $\uparrow$
& $\ell_2$ $\uparrow$
\\

Full Attention
& $0\%$
& 0.125
& 27.15
& 0.0252
& 2.83
\\

FastWan (VSA)
& $90\%$
& 0.075
& 21.31
& 0.0117
& 2.05
\\

TurboDiffusion
& $90\%$
& 0.078
& 21.67
& 0.0125
& 2.21
\\

\mbox{SparkDiffusion (Ours)}
& $97\%$
& \best{0.087}
& \best{23.04}
& \best{0.0142}
& \best{2.47}
\\

\end{tblr}
\end{table}

Table~\ref{tab:diversity} quantifies seed-level diversity, complementing the qualitative check in Figure~\ref{fig:diversity_three_methods}.
Following~\citet{shaul2026pdd}, we encode each video with two frozen video encoders, V-JEPA 2~\citep{assran2025v} and VideoMAE V2~\citep{wang2023videomaev2}, and report average pairwise cosine and $\ell_2$ distances among videos sampled from the same prompt with different noise seeds; higher values indicate more diverse samples, and the dense model serves as a reference.
Despite operating at $97\%$ sparsity, \method stays closest to the dense reference, whereas the few-step baselines lose a visible fraction of seed-level variation, consistent with the mode-seeking tendency of distribution matching~\citep{chen2026data}.
This supports the trajectory-mixed design of Sec.~\ref{subsec:distill}: high-noise consistency matching preserves coarse structure and diversity, while low-noise distribution matching improves terminal fidelity.

\subsection{Distillation objective at extreme sparsity}
\label{subsec:exp_objective_ablation}
\begin{wraptable}{r}{0.53\linewidth}
    
    \centering
    \scriptsize

    \caption{
    Distillation-objective ablation at $97\%$ attention sparsity
    (Wan2.1-T2V-14B-720P; 3-step CFG-free student; All 3-step student rows initialize from the same Stage-1 warm-up checkpoint.)
    }
    \label{tab:objective_ablation}

    \begin{tblr}{
        width=\linewidth,
        colspec={Q[l,wd=4.25cm] X[c] X[c]},
        colsep=1.8pt,
        rows={rowsep=2.2pt},
        row{1}={bg=SparkDiffusionHeader,font=\bfseries},
        cell{2}{1-3}={fg=gray},
        cell{5}{1-3}={bg=SparkDiffusionLight},
        cell{5}{1}={fg=SparkDiffusionRed,font=\bfseries},
        hline{1}={1.25pt,SparkDiffusionRed},
        hline{2}={0.7pt,SparkDiffusionRed},
        hline{6}={1.25pt,SparkDiffusionRed}
    }

    Stage-2 objective
    & VBench $\uparrow$
    & VBench-2.0 $\uparrow$
    \\

    \mbox{Full Attention (dense, 50-step)}
    & 83.69
    & 60.20
    \\

    PCM only (3-step)
    & 81.94
    & 56.41
    \\

    DMD only (3-step)
    & 82.56
    & 57.38
    \\

    \mbox{CrossDistill (Ours, 3-step)}
    & \best{83.15}
    & \best{58.05}
    \\

    \end{tblr}

    \vspace{-10pt}
\end{wraptable}

Table~\ref{tab:objective_ablation} ablates the Stage-2 objective at $97\%$ sparsity, with all students initialized from the same Stage-1 warm-up checkpoint and sampled with three CFG-free steps.
Both single-objective variants already recover most of the quality lost at $97\%$ sparsity, corroborating that the trap is one of supervision: any terminal-aligned objective mitigates it.
The two pure objectives, however, fail in opposite directions along the fidelity--diversity axis: PCM-only training tracks the teacher's trajectory but never aligns the terminal distribution, yielding the weakest scores, while DMD-only training aligns the terminal distribution but is mode-seeking, losing coarse structure and seed-level diversity (cf.\ Table~\ref{tab:diversity}).
The trajectory-mixed objective attains the best of both segments; together with the warm-up ablation (Fig.~\ref{fig:warm_up_ablation}), this supports the staging principle of Sec.~\ref{sec:trap}.

\begin{figure}[htbp]
    \centering
    \includegraphics[width=0.75\linewidth]{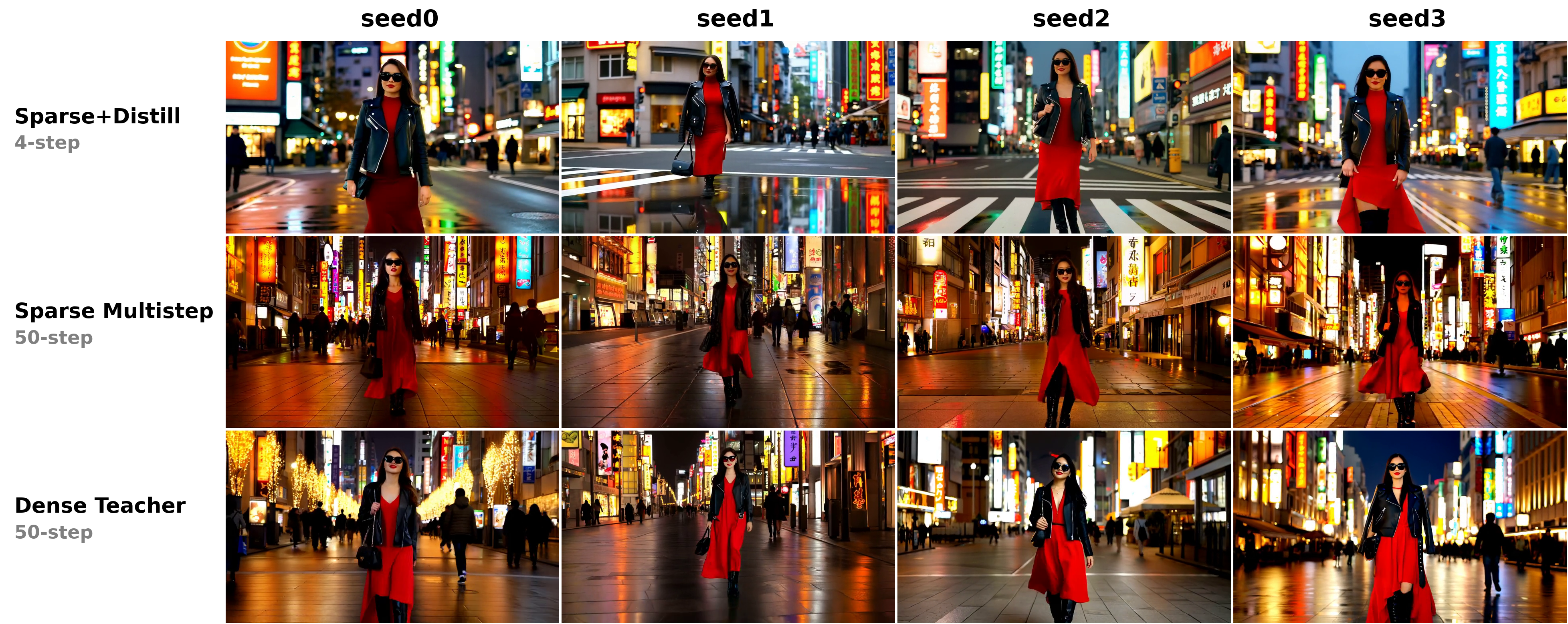}
\caption{
         Few-step distillation preserves sample diversity.
        On Wan2.1-T2V-14B-480P, for each of two prompts, we show one frame per video under three models
        (dense teacher; sparse model with 90\% block sparsity and rank-64 compensation; 3-step student distilled from it)
        using the same four noise seeds.
}
\label{fig:diversity_three_methods}
\end{figure}
\subsection{Cross-model qualitative validation}
\label{subsec:exp_cross_model}

To examine whether \method{} generalizes beyond the primary benchmark setting, we conduct qualitative comparisons across multiple model scales, tasks, resolutions, and sparsity levels:
Wan2.1-T2V-1.3B at $480\times832$ with $90\%$ sparsity (Figure~\ref{fig:wan21_1.3b_480p_dense_turbo_ours}),
Wan2.1-T2V-14B at $480\times832$ with $90\%$ sparsity (Figure~\ref{fig:wan21_14b_480p_dense_turbo_ours}),
Wan2.1-T2V-14B at $720\times1280$ with $97\%$ sparsity (Figure~\ref{fig:wan21_14b_720p_dense_turbo_ours}),
Wan2.1-I2V-14B at $720\times1280$ with $97\%$ sparsity (Figure~\ref{fig:wan21_14b_i2v720p_full_vs_sparse_top005}),
and Wan2.2-T2V-A14B at $720\times1280$ with $97\%$ sparsity (Figure~\ref{fig:app_wan22_t2v_720p_full_vs_ours_97}).
All accelerated models use 3-step inference; TurboDiffusion and FastWan use their officially released weights and inference scripts.
Full settings are given in Appendix~\ref{app:additional_qualitative}.
These results suggest that \method{} maintains coherent structure, semantic alignment, and temporal detail even when attention sparsity is pushed to $97\%$.

\section{Conclusion}
\label{sec:conclusion}

We identified the \emph{high-sparsity trap}: at extreme attention sparsity, step-local training converges while terminal generation quality degrades, and its root cause lies in the supervision paradigm.
The following staging principle---\emph{first adapt the sparse architecture into a coarse prior, then correct the terminal distribution}---naturally leads to \method, a unified acceleration framework for visual generation chaining a short sparse warm-up, few-step trajectory-mixed distillation, and FP8 quantization with fused kernels.
\method sustains $97\%$ attention sparsity on long-sequence 720P generation with strong visual quality, delivering a measured $265\times$ end-to-end speedup on Wan2.1-T2V-14B-720P on a single RTX~5090 ($220\times$ on H100) and generating a Wan2.1-T2V-1.3B-480P video end-to-end in $1.3$ seconds, across Wan2.1/Wan2.2 backbones, T2V/I2V tasks, and 480P/720P resolutions.
Going forward, we will extend this framework to bring high-sparsity acceleration to omni-modal generative models and autoregressive world models.

\section{Acknowledgments}
This work was supported by Alibaba Group through Alibaba Research Intern Program.

\clearpage
\bibliographystyle{plainnat}
\bibliography{references}

\clearpage
\beginappendix

\section{Related Work}
\label{sec:related}

\paragraph{High-sparsity attention for video diffusion.}
Sparse attention is a widely used route for reducing the quadratic attention cost of video DiTs. Training-free methods exploit recurring spatiotemporal structure, search structured sparse patterns, or construct sparse masks dynamically during inference~\citep{xi2025sparse,yang2026sparse,chen2026sparse,liu2026mixture,zhang2025spargeattention,li2026radial,zhang2025fast,xia2025training}, while trainable approaches learn adaptive sparse computation from data~\citep{zhang2026faster,wu2026vmoba}. To retain global context under aggressive sparsification, compensation-based methods such as SLA~\citep{zhang2026sla}, SLA2~\citep{zhang2026sla2}, and SALAD~\citep{fang2026salad} combine sparse attention with lightweight linear branches. RoPeSLR~\citep{liu2026ropeslr} further identifies a 3D-RoPE-aware sparse-plus-low-rank structure in video attention, where high-energy semantic interactions are captured by a sparse branch and the remaining background context is modeled by a low-rank MLP. These methods motivate the architectural feasibility of high sparsity, but they primarily focus on sparse-plus-compensation designs and do not directly address the terminal-quality failure that can appear when sparsity is pushed to the extreme regime studied in this paper.

\paragraph{Sparse attention and few-step distillation.}
Few-step diffusion generation has been widely explored through distribution-matching and consistency-based approaches~\citep{yin2024one,yin2024improved,song2023consistency,wang2024phased,luo2025learning,zheng2026large}. Recent works combine sparse attention with few-step distillation to reduce both per-step computation and sampling steps. VSA~\citep{zhang2026faster} jointly trains sparse attention with DMD-style distillation, but DMD-style objectives are known to reduce sample diversity due to mode-seeking behavior~\citep{chen2026data}.  TurboDiffusion~\citep{zhang2025turbodiffusion} trains its
sparse-attention (SLA)\cite{zhang2026sla} and consistency (rCM)\cite{zheng2026large} components separately and then merges the
parameter updates.
\method instead takes a staged route motivated by the diagnosis in Sec.\ref{sec:trap}:
sparse architecture adaptation and terminal alignment address two distinct failure
modes---the former produces a usable generative prior under extreme sparsity, the
latter corrects the terminal distribution, so we decouple them into two sequential
phases, each with a single standard objective.

\section{Additional Results}
\label{app:Additional_Results}

\subsection{Additional Qualitative Results}
\label{app:additional_qualitative}

This appendix reports additional qualitative results across model scales, tasks, resolutions, and attention-sparsity levels.
Full Attention is evaluated with 50 sampling steps and classifier-free guidance (NFE${}=100$); all accelerated models are evaluated with three CFG-free inference steps (NFE=3).
Where included, TurboDiffusion and FastWan are reproduced from their officially released weights and inference scripts.
The same prompts, random seeds, and displayed temporal positions are used across all comparisons.

\begin{figure}[htbp]
    \centering
    \includegraphics[width=\linewidth]{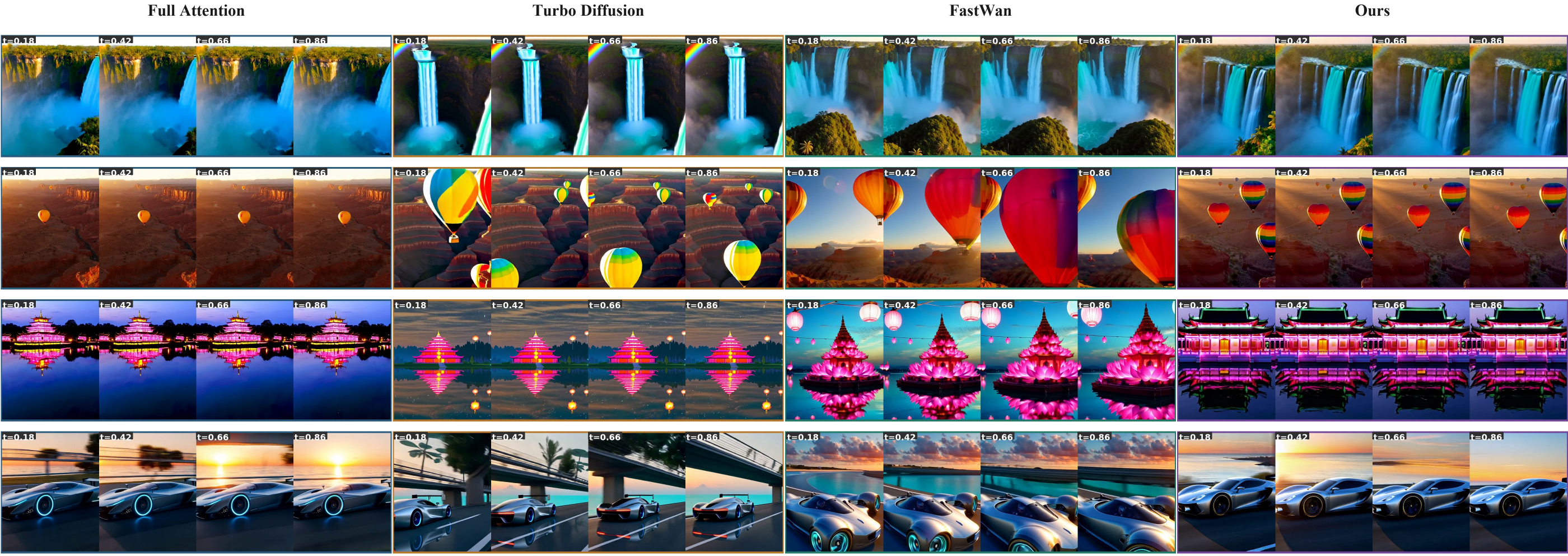}
    \caption{
        \textbf{Wan2.1-T2V-1.3B-480P.}
        Qualitative results at $480\times832$ resolution with 81 frames; TurboDiffusion and
FastWan and SparkDiffusion operate at $90\%$ attention sparsity.
    }
    \label{fig:wan21_1.3b_480p_dense_turbo_ours}
\end{figure}

\begin{figure}[htbp]
    \centering
    \includegraphics[width=\linewidth]{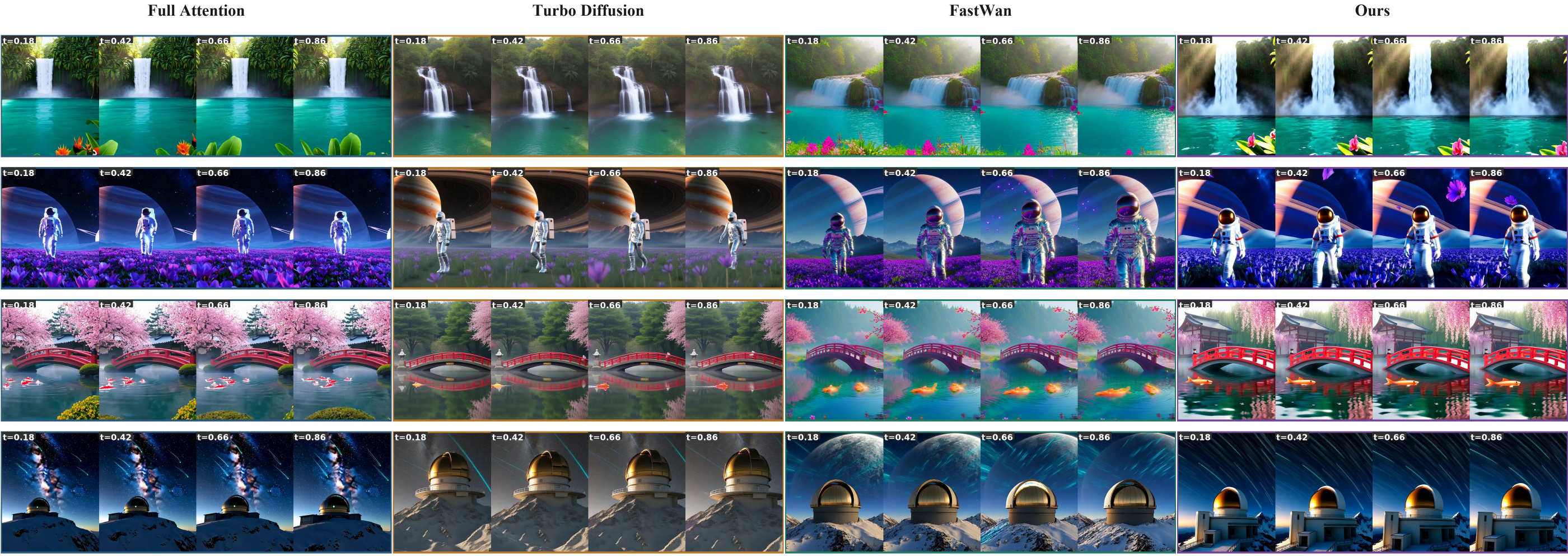}
    \caption{
        \textbf{Wan2.1-T2V-14B-480P.}
        Qualitative results at $480\times832$ resolution with 81 frames; all methods operate at $90\%$ attention sparsity.
    }
    \label{fig:wan21_14b_480p_dense_turbo_ours}
\end{figure}

\begin{figure}[htbp]
    \centering
    \includegraphics[width=\linewidth]{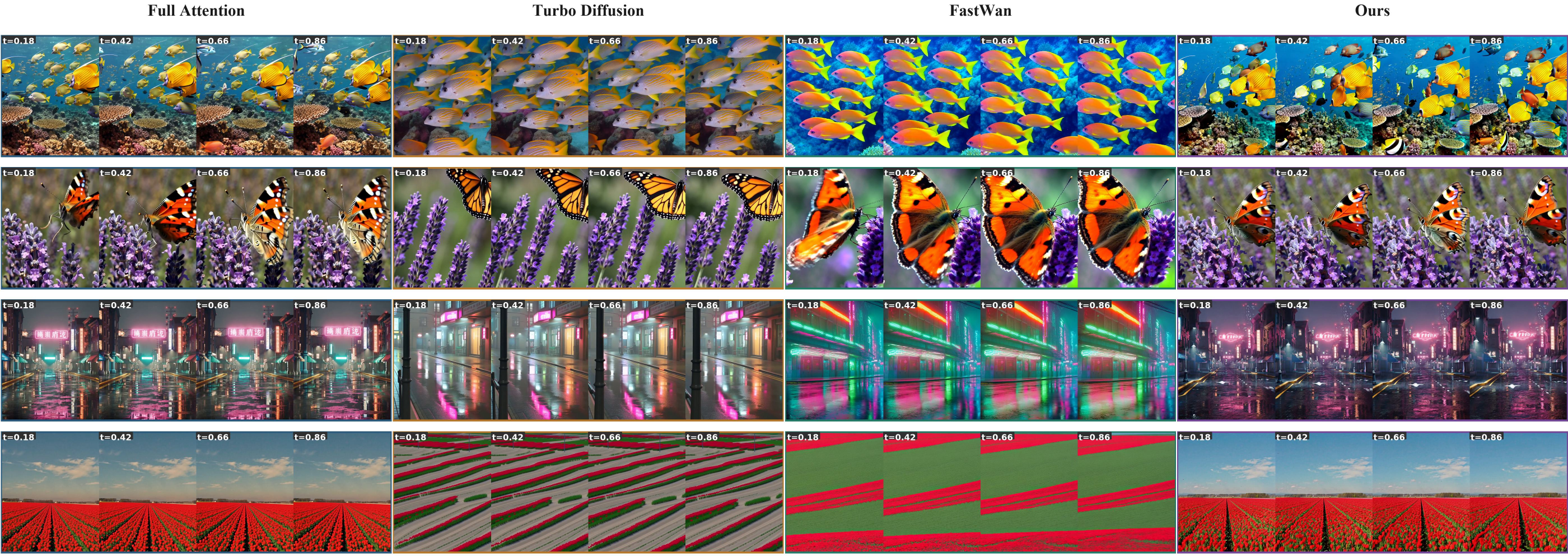}
    \caption{
        \textbf{Wan2.1-T2V-14B-720P.}
        Qualitative results at $720\times1280$ resolution with 81 frames; TurboDiffusion and FastWan operate at $90\%$ attention sparsity, \method{} at $97\%$.
    }
    \label{fig:wan21_14b_720p_dense_turbo_ours}
\end{figure}

\begin{figure}[htbp]
    \centering
    \includegraphics[width=0.85\linewidth]{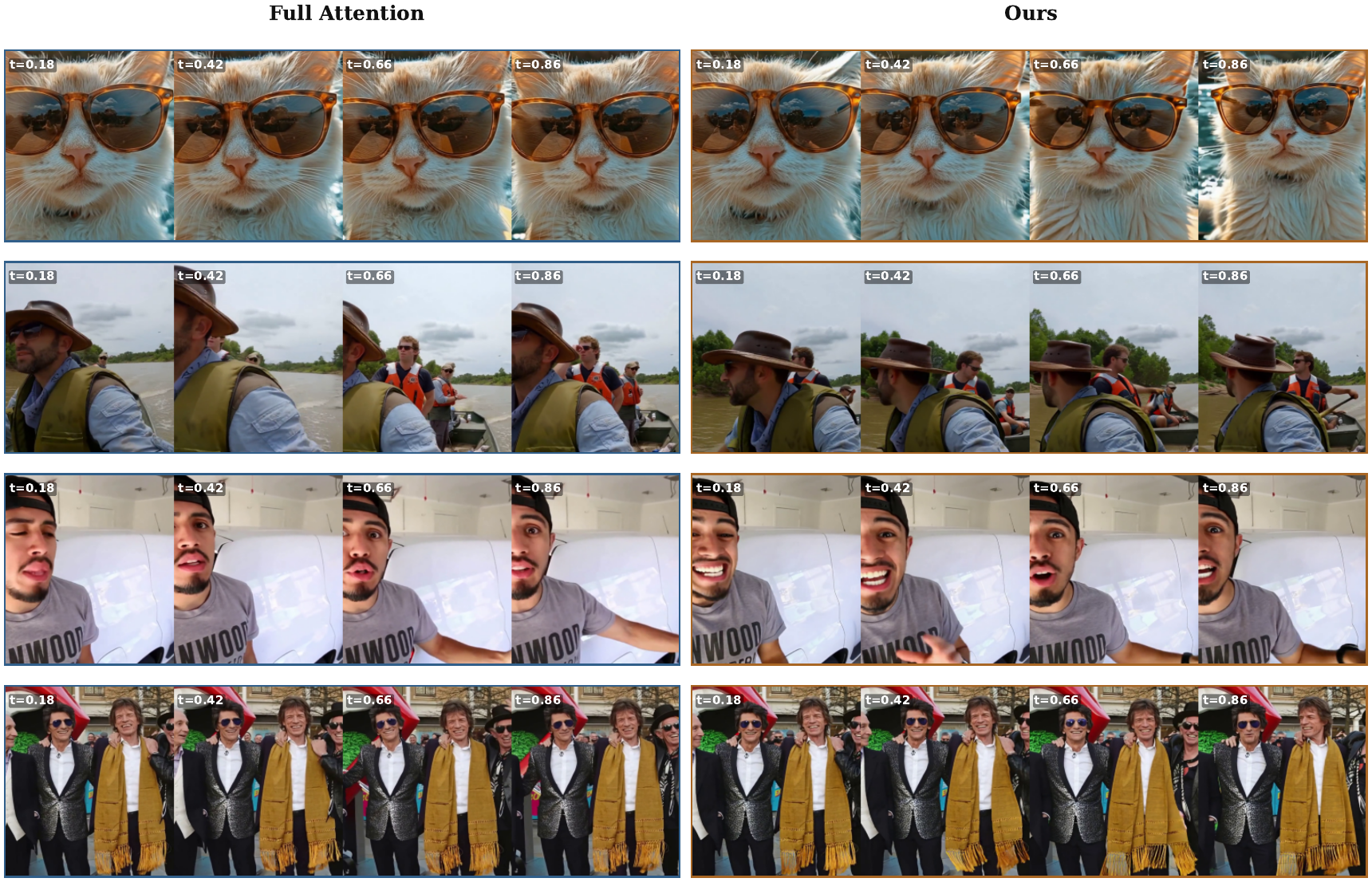}
    \caption{
        \textbf{Wan2.1-I2V-14B-720P.}
        Qualitative results at $720\times1280$ resolution with 81 frames; \method{} operates at $97\%$ attention sparsity.
    }
    \label{fig:wan21_14b_i2v720p_full_vs_sparse_top005}
\end{figure}

\begin{figure}[htbp]
    \centering
    \includegraphics[width=0.85\linewidth]{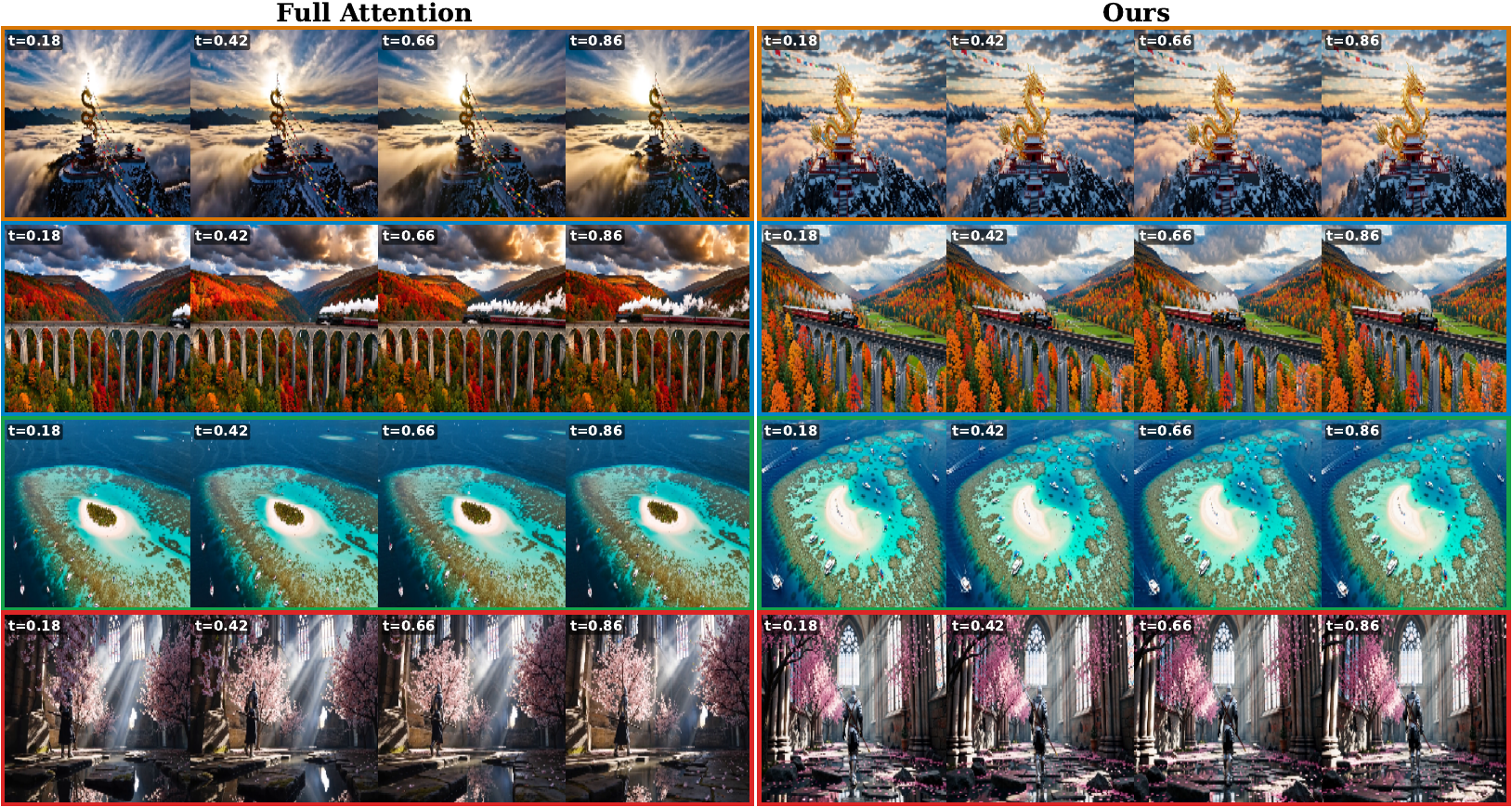}
    \caption{
        \textbf{Wan2.2-T2V-A14B-720P.}
        Qualitative results at $720\times1280$ resolution with 81 frames; \method{} operates at $97\%$ attention sparsity.
    }
    \label{fig:app_wan22_t2v_720p_full_vs_ours_97}
\end{figure}

\begin{figure}[H]
    \centering
    \includegraphics[width=0.5\linewidth]{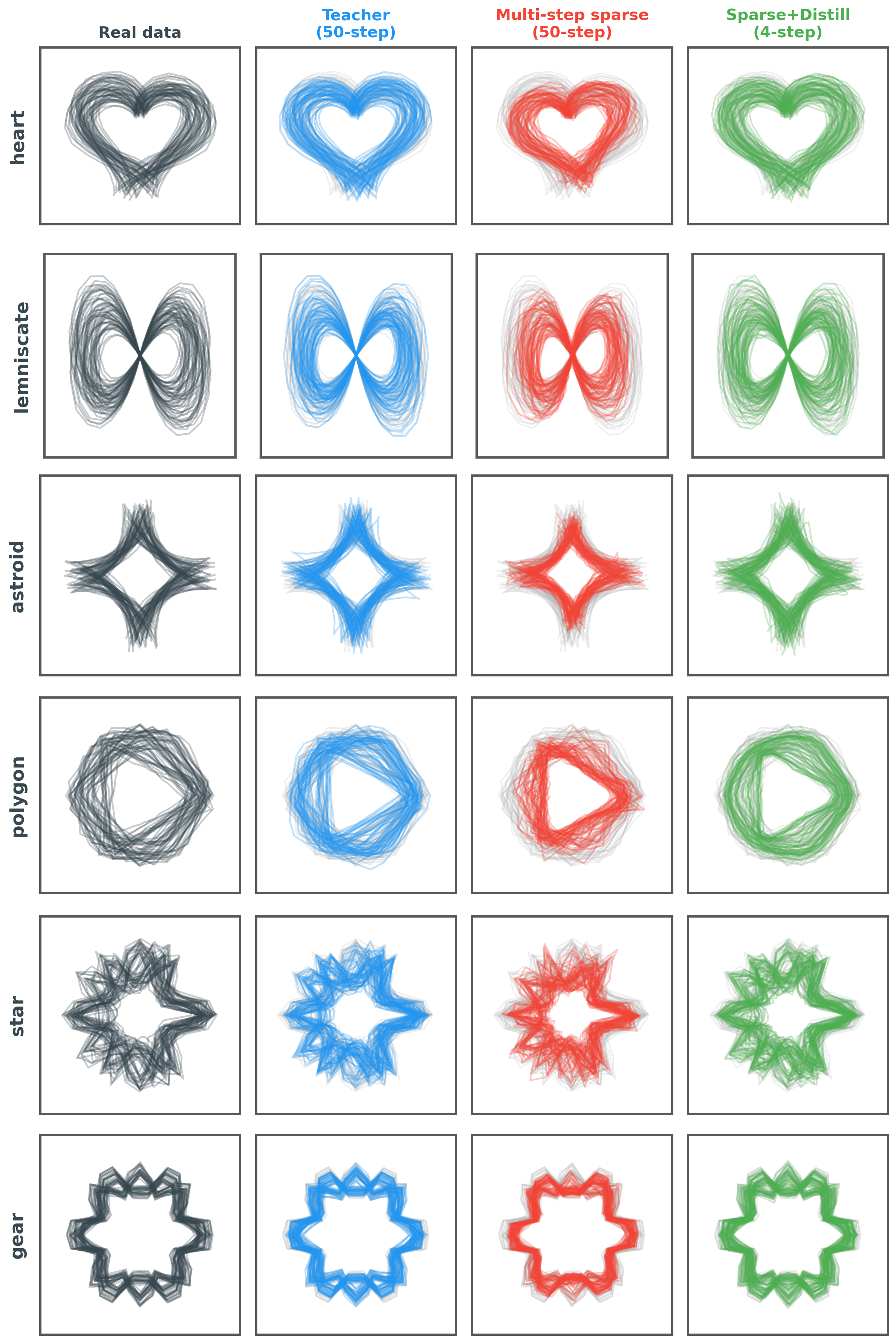}
\caption{
        Qualitative sample overlays for the same six toy sequence manifolds.
        In this toy setting, the multi-step sparse model can drift away from the data distribution,
        producing distorted or misaligned contours,
        while the 3-step distilled student produces more globally consistent shapes.
}
    \label{fig:mainfld_showcase}
\end{figure}

\subsection{FP8 versus BF16 quality ablation}
\label{app:fp8_ablation}

Table~\ref{tab:fp8_ablation} reports the quality impact of the W8A8 FP8 deployment step of Sec.~\ref{subsec:fp8}, comparing each deployed model against its BF16 counterpart under identical 3-step CFG-free inference.
Across backbones and sparsity levels, FP8 quantization changes VBench and VBench-2.0 by at most $0.07$ points, confirming that with few-step sampling the quantization error has little room to accumulate and that the speedups in Fig.~\ref{fig:sparkdiffusion_latency_speedup_h100_rtx5090} come at negligible quality cost.
All \method quality numbers in the main text are therefore reported directly on the deployed FP8 model.

\begin{table}[htbp]
\centering
\small
\setlength{\tabcolsep}{6pt}
\renewcommand{\arraystretch}{1.15}
\caption{
FP8 (W8A8) versus BF16 generation quality on deployed \method models (3-step CFG-free inference).
}
\label{tab:fp8_ablation}
\arrayrulecolor{SparkDiffusionRed}
\begin{tabular}{llcccc}
\toprule[1.25pt]
\rowcolor{SparkDiffusionHeader}
\textbf{Model} & \textbf{Sp.} & \textbf{VBench (BF16)} & \textbf{VBench (FP8)} & \textbf{VBench-2.0 (BF16)} & \textbf{VBench-2.0 (FP8)} \\
\midrule[0.65pt]
Wan2.1-T2V-1.3B-480P & $90\%$ & 82.68 & 82.64 & 55.91 & 55.85 \\ 
Wan2.1-T2V-14B-720P  & $90\%$ & 83.47 & 83.42 & 59.41 & 59.36 \\ 
Wan2.1-T2V-14B-720P  & $97\%$ & 83.22 & 83.15 & 58.12 & 58.05 \\ 
Wan2.2-T2V-A14B-720P & $97\%$ & 83.41 & 83.36 & 58.53 & 58.46 \\ 
\bottomrule[1.25pt]
\end{tabular}
\arrayrulecolor{black}
\end{table}


\makeatletter
\@ifundefined{theorem}{\newtheorem{theorem}{Theorem}[section]}{}
\@ifundefined{proposition}{\newtheorem{proposition}[theorem]{Proposition}}{}
\@ifundefined{lemma}{\newtheorem{lemma}[theorem]{Lemma}}{}
\@ifundefined{corollary}{\newtheorem{corollary}[theorem]{Corollary}}{}
\@ifundefined{definition}{\newtheorem{definition}[theorem]{Definition}}{}
\@ifundefined{assumption}{\newtheorem{assumption}[theorem]{Assumption}}{}
\@ifundefined{example}{\newtheorem{example}[theorem]{Example}}{}
\@ifundefined{remark}{\newtheorem{remark}[theorem]{Remark}}{}
\@ifundefined{limbox}{%
  \newenvironment{limbox}{\par\smallskip\noindent\ignorespaces}{\par\smallskip}}{}
\@ifundefined{limbox@blue}{}{}
\makeatother


\newpage
\section{Idealized Mechanism: Why Step-Local Training Can Leave a Terminal Error}
\label{sec:insight}
\label{sec:insight_terminal_misalignment}

\mathtoolsset{showonlyrefs=false}
\renewcommand{\eqref}[1]{\textup{(\ref{#1})}}

\begin{limbox_blue}
\textbf{In three sentences.}
The flow-matching loss sums the \emph{squared magnitudes} of per-step velocity errors, whereas the terminal sample depends on a \emph{signed, transported sum} of the same errors along the sampling trajectory.
Therefore two models can have identical step-local training loss but very different terminal outputs: one whose per-step errors point consistently and accumulate, and one whose errors oscillate and cancel.
In a deliberately small surrogate, we show that the terminal-visible error left by a step-locally optimal model can still be reduced through the \emph{same} trainable parameters by a terminal-aligned signal, and we compute the surrogate price of doing so.
\end{limbox_blue}

\subsection{Roadmap}
\label{ins:sec:roadmap}

Figure~\ref{fig:ins-roadmap} summarizes the logical flow of the appendix.

\begin{figure}[t]
\centering
\begin{tikzpicture}[
  >=Latex,
  insbox/.style={
    draw,
    rounded corners=3pt,
    align=left,
    inner sep=5pt,
    text width=5.2cm,
    minimum height=2.4cm,
    anchor=north,
    font=\footnotesize
  },
  setupbox/.style={insbox, draw=black!55, fill=black!4},
  stepbox/.style={insbox, draw=blue!45!black, fill=blue!5}
]

\node[setupbox] (setup) at (0,0)
{
  \textbf{Surrogate setup}\\[2pt]         
  Euler sampler, affine reference field, affine terminal readout, affine trainable residual
};

\node[stepbox] (step1) at (6.0,0)
{
  \textbf{Step 1} (Lemma~\ref{ins:lem:pairing})\\[2pt]
  Terminal error equals the sensitivity-weighted sum of per-step errors:
  $\Psi=\langle e,s\rangle_h$
};

\node[stepbox] (step2) at (0,-2.9)
{
  \textbf{Step 2} (Proposition~\ref{ins:prop:accumulation})\\[2pt]
  Coherent errors accumulate over the sampling horizon; oscillatory errors can cancel at the same step-local loss
};

\node[stepbox] (step3) at (6.0,-2.9)
{
  \textbf{Step 3} (Proposition~\ref{ins:prop:survive})\\[2pt]
  Step-local training removes the reachable part of the error and stops; the leftover is invisible to the step-local gradient but visible terminally
};

\node[stepbox] (step4) at (3.0,-5.8)
{
  \textbf{Step 4} (Corollary~\ref{ins:cor:price}, Proposition~\ref{ins:prop:terminal_descent})\\[2pt]
  A terminal-aligned signal reduces the surviving terminal error through the same trainable subspace, at an explicitly computed surrogate price
};

\draw[->, thick, black!65] (setup.east) -- (step1.west);
\draw[->, thick, black!65] (setup.south) -- (step2.north);
\draw[->, thick, black!65] (step1.south) -- (step3.north);
\draw[->, thick, black!65] (step2.south) -- ++(0,-0.25) -| (step4.north);
\draw[->, thick, black!65] (step3.south) -- ++(0,-0.25) -| (step4.north);

\end{tikzpicture}
\caption{
\textbf{Roadmap of the mechanism.}
Each box states one step of the argument and where it is proved.
The proof is Hilbert-space geometry: a quadratic step-local loss and a linear terminal functional see the same residual differently.
Sparsity enters only through empirical inputs---the residual left by sparse training and the trainable correction subspace---not as a mathematical object in the proof.
}
\label{fig:ins-roadmap}
\end{figure}

The argument has four steps:
\begin{enumerate}[label=\textbf{Step \arabic*.},leftmargin=4.4em,itemsep=0.25em]
\item The terminal error is exactly the sensitivity-weighted sum of per-step velocity errors.
\item If the per-step errors are coherently aligned with the terminal sensitivity, they accumulate; if they oscillate, they can cancel, while the step-local loss remains unchanged.
\item Step-local training removes exactly the part of the error that lies in the trainable subspace and leaves the orthogonal leftover. At the step-local optimum, the loss gradient vanishes, but the terminal error need not vanish.
\item A terminal-aligned objective can reduce this surviving terminal error using the same trainable subspace. The first-order terminal gain costs only second-order step-local sub-optimality.
\end{enumerate}

\begin{limbox}
\textbf{What sparsity does and does not do here.}
Sparsity is \emph{not} a mathematical object in this appendix.
The analysis takes as input that high-sparsity training leaves a residual that is
(i) nonzero,
(ii) coherently aligned with the terminal sensitivity rather than zero-mean jitter, and
(iii) partly outside the span of the trainable correction directions.
We do not prove that aggressive sparsification produces these properties.
The conclusion is conditional:
\emph{if} high-sparsity training leaves such a residual,
\emph{then} step-local training cannot remove its terminal-visible part, whereas a terminal-aligned signal can reduce it at first order through the same trainable parameters.
\end{limbox}

\subsection{Setup and notation}
\label{ins:sec:setup}

We use a deliberately small surrogate in which every object has a simple geometric meaning.
Table~\ref{tab:insight-notation} summarizes the notation.

\begin{table}[t]
\centering
\small
\caption{
Notation used in the appendix.
The key objects are the per-step velocity error $e$, the terminal sensitivity $s$, the trainable correction subspace $\mathcal U$, and the two objectives: the step-local loss $\mathcal L$ and the terminal error $\Psi$.
}
\label{tab:insight-notation}
\begin{tabular}{@{}p{0.19\textwidth} p{0.73\textwidth}@{}}
\toprule
Symbol & Meaning \\
\midrule
$N,h,T$ & number of sampler steps, step size, horizon $T=Nh$ \\
$x_k^\star$ & reference trajectory generated by $v_\star$ \\
$e_k$ & per-step velocity error at step $k$ \\
$e=(e_0,\dots,e_{N-1})$ & full velocity-error sequence \\
$R(x)=c^\top x$ & linear terminal observable \\
$\Psi$ & terminal error $R(x_N)-R(x_N^\star)$ \\
$s_k$ & terminal sensitivity transported back to step $k$ \\
$\langle\cdot,\cdot\rangle_h$ & step-weighted inner product $h\sum_k a_k^\top b_k$ \\
$\mathcal U$ & subspace of residual corrections reachable by training \\
$P_{\mathcal U}$, $P_{\mathcal U^\perp}$ & orthogonal projections onto $\mathcal U$ and its orthogonal complement \\
$s_{\mathcal U}$, $s_{\mathcal U^\perp}$ & reachable and unreachable parts of the terminal sensitivity \\
$\ell$ & leftover residual $P_{\mathcal U^\perp}e^0$ after step-local training \\
$\beta$ & norm of the leftover residual, $\beta=\|\ell\|_h$ \\
$\mu,\epsilon$ & coherence constant and per-step error floor \\
$\mathcal L$ & step-local quadratic loss $\frac12\|e\|_h^2$ \\
$\Delta$ & step-local sub-optimality relative to the step-local optimum \\
$\mathcal C$ & surrogate price of cancelling the terminal error \\
\bottomrule
\end{tabular}
\end{table}

\paragraph{Sampler, reference field, and terminal observable.}
Fix $N\in\mathbb N$, step size $h>0$, horizon $T=Nh$, and times $t_k=kh$.
The index $k$ runs along the sampling recursion; if the deployed sampler traverses noise levels in the opposite direction, relabel $t_k$ accordingly---nothing below depends on the direction.
An Euler sampler with velocity field $v$ generates
\[
x_0^v=z,
\qquad
x_{k+1}^v=x_k^v+h\,v(x_k^v,t_k).
\]                                      
The reference field $v_\star$ generates the reference trajectory $x^\star$ in the same way.
We assume $v_\star(\cdot,t)$ is affine, so its one-step Jacobian
\[
A_k:=I_d+h\,\partial_x v_\star(\cdot,t_k)
\]
does not depend on the state.
Let
\[
\Pi_j:=A_{N-1}\cdots A_j,
\qquad
\Pi_N:=I_d.
\]
The terminal observable is affine:
\[
R(x)=c^\top x,
\qquad c\ne0.
\]
For a model field $v$, the terminal error is
\begin{equation}
\label{ins:eq:terminal-error}
\Psi(v):=R(x_N^v)-R(x_N^\star).
\end{equation}
For a parametrized model we write $\Psi(a):=\Psi(v(a))$.

\paragraph{Velocity error.}
Let $e_k$ denote the model's velocity error at step $k$.
In the main surrogate, $e_k$ is a per-step bias: it depends on the step index but not on the state.
The full residual is
\[
e=(e_0,\dots,e_{N-1}),
\qquad e_k\in\mathbb R^d.
\]
We equip residual sequences with the step-weighted inner product
\begin{equation}
\label{ins:eq:inner-product}
\langle a,b\rangle_h
:=h\sum_{k=0}^{N-1}a_k^\top b_k,
\qquad
\|a\|_h^2
=h\sum_{k=0}^{N-1}\|a_k\|_2^2.
\end{equation}
Up to the factor $T/2$, $\frac12\|e\|_h^2$ is the usual mean-squared velocity error.

\paragraph{Trainable residual corrections.}
We model the effect of training as adding trainable directions to a baseline residual.
Let $e^0$ be a baseline residual and let $U$ be a linear map whose columns are trainable correction directions.
Write
\begin{equation}
\label{ins:eq:affine-residual}
e(a)=e^0+Ua,
\qquad
\mathcal U:=\operatorname{range}(U),
\end{equation}
where $a$ denotes the coordinates of the trainable correction.
As $a$ ranges over all coordinates, $Ua$ ranges over all of $\mathcal U$, so the set of reachable residuals is exactly $e^0+\mathcal U$.
Both the step-local stage and the terminal-aligned stage move the same parameters, so they act through the same subspace $\mathcal U$.
Let $P_{\mathcal U}$ and $P_{\mathcal U^\perp}$ denote orthogonal projections onto $\mathcal U$ and $\mathcal U^\perp$ with respect to $\langle\cdot,\cdot\rangle_h$.

\paragraph{Terminal sensitivity.}
Define the terminal sensitivity vectors
\begin{equation}
\label{ins:eq:sensitivity}
s_k:=\Pi_{k+1}^\top c,
\qquad
s=(s_0,\dots,s_{N-1}),
\end{equation}
and split them into a reachable and an unreachable part,
\begin{equation}
\label{ins:eq:sensitivity-split}
s_{\mathcal U}:=P_{\mathcal U}s,
\qquad
s_{\mathcal U^\perp}:=P_{\mathcal U^\perp}s,
\qquad
s=s_{\mathcal U}+s_{\mathcal U^\perp}.
\end{equation}                            
The vector $s_k$ measures how strongly the terminal readout $R(x_N)$ reacts to a small velocity perturbation injected at step $k$.
It is determined by the sampler and the reference field; it does not depend on the model parameters.

\paragraph{Two objectives.}
The step-local loss is the quadratic
\begin{equation}
\label{ins:eq:step-local-loss}
\mathcal L(a):=\frac12\|e(a)\|_h^2.
\end{equation}
The terminal error, viewed as a functional of the residual, is the signed, sensitivity-weighted sum
\begin{equation}
\label{ins:eq:terminal-functional}
\Psi(a):=\langle e(a),s\rangle_h,
\end{equation}
which Lemma~\ref{ins:lem:pairing} shows to coincide with \eqref{ins:eq:terminal-error}.
The key contrast is:
\[
\mathcal L \text{ sees only } \|e_k\|_2^2,
\qquad
\Psi \text{ sees the signed sum } h\sum_k s_k^\top e_k.
\]

\paragraph{Useful expansion.}
For any perturbation $\delta a$,
\begin{equation}
\label{ins:eq:expansion}
\mathcal L(a+\delta a)
=
\mathcal L(a)
+
\langle e(a),U\delta a\rangle_h
+
\frac12\|U\delta a\|_h^2,
\end{equation}
and
\begin{equation}
\label{ins:eq:terminal-expansion}
\Psi(a+\delta a)-\Psi(a)
=
\langle U\delta a,s\rangle_h.
\end{equation}

\subsection{Step 1: Terminal error is the sensitivity-weighted sum of per-step errors}
\label{ins:sec:step1}

\begin{lemma}[Pairing identity]
\label{ins:lem:pairing}
\label{lem:bridge}                        
Under the affine surrogate, for every parameter $a$,
\begin{equation}
\label{ins:eq:pairing}
\Psi(a)
=
h\sum_{k=0}^{N-1}s_k^\top e_k(a)
=
\langle e(a),s\rangle_h .
\end{equation}

\end{lemma}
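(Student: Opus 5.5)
We propose to prove the identity by a discrete variation-of-constants argument, tracking the deviation of the model trajectory from the reference trajectory and reading it out through $c$. The key modelling convention is how the model field is related to the reference field in the surrogate. The velocity error $e_k$ is a per-step, state-independent bias, and $v_\star$ is affine. So we interpret the model field at step $k$ as $v(x,t_k)=v_\star(x,t_k)+e_k(a)$ for all $x$. This is the reading under which $e_k$ is ``the velocity error at step $k$'' regardless of where the trajectory sits.

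With this convention, set $\delta_k:=x_k^v-x_k^\star$, with $\delta_0=0$ since both samplers start from the same $z$. Subtracting the two Euler recursions gives
\[
\delta_{k+1}=\delta_k+h\bigl(v_\star(x_k^v,t_k)-v_\star(x_k^\star,t_k)\bigr)+h\,e_k .
\]
Because $v_\star(\cdot,t_k)$ is affine, the bracket equals $\partial_x v_\star(\cdot,t_k)\,\delta_k$ exactly, with no remainder. The recursion therefore becomes the linear inhomogeneous relation
\[
\delta_{k+1}=A_k\delta_k+h\,e_k .
\]

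Unrolling this recursion by induction on $k$ gives
\[
\delta_N=\sum_{k=0}^{N-1}\Pi_{k+1}\,h\,e_k .
\]
Here the product convention $\Pi_j=A_{N-1}\cdots A_j$ with $\Pi_N=I_d$ handles the last term. The inductive step only needs $A_k\Pi_{k+1}$-type bookkeeping, checking that each $e_k$ is propagated through exactly $A_{k+1},\dots,A_{N-1}$.

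Applying the linear readout then gives
\[
\Psi(a)=c^\top\delta_N=h\sum_k c^\top\Pi_{k+1}e_k=h\sum_k (\Pi_{k+1}^\top c)^\top e_k=h\sum_k s_k^\top e_k ,
\]
using definition~\eqref{ins:eq:sensitivity}. The final equality with $\langle e(a),s\rangle_h$ is just definition~\eqref{ins:eq:inner-product}. We will note that $R$ affine rather than linear changes nothing, since constants cancel in the difference.

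The main obstacle is not computational but definitional: the lemma requires the exact additive convention $v=v_\star+e_k$, so that the affine structure makes the linearization exact. We will state this convention explicitly at the start. We will also remark that if $e_k$ were instead defined along the reference trajectory with a state-dependent model field, the identity would hold only to first order. The rest is a routine induction.
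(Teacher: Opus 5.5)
Your proposal is correct and follows the paper's proof essentially line for line. Like the paper, you form the deviation recursion $z_{k+1}=A_kz_k+he_k(a)$, which is exact because $v_\star$ is affine, unroll it to $z_N=h\sum_k\Pi_{k+1}e_k(a)$, and read out through $c$ to get $h\sum_k s_k^\top e_k(a)$. The paper only needs $e_k(a)=v(a)(x_k^{v(a)},t_k)-v_\star(x_k^{v(a)},t_k)$, the error along the model trajectory, and your global convention $v=v_\star+e_k$ is used only at $x=x_k^{v(a)}$, so your argument covers the paper's setting.
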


\begin{proof}
Let
\[
z_k:=x_k^{v(a)}-x_k^\star
\]
be the state deviation at step $k$, with $z_0=0$.
Subtracting the reference Euler update from the model Euler update gives
\[
z_{k+1}
=
z_k
+
h\bigl[v(a)(x_k^{v(a)},t_k)-v_\star(x_k^\star,t_k)\bigr].
\]
Because $v_\star$ is affine,
\[
v_\star(x_k^{v(a)},t_k)-v_\star(x_k^\star,t_k)
=
\partial_x v_\star(\cdot,t_k)z_k.
\]
By definition of the residual $e_k(a)$,
\[
v(a)(x_k^{v(a)},t_k)-v_\star(x_k^\star,t_k)
=
\partial_x v_\star(\cdot,t_k)z_k+e_k(a).
\]
Therefore
\[
z_{k+1}
=
A_k z_k+h e_k(a).
\]
Unrolling this recursion with $z_0=0$ yields
\[
z_N
=
h\sum_{k=0}^{N-1}\Pi_{k+1}e_k(a).
\]
Since $R(x)=c^\top x$,
\[
\Psi(a)
=
c^\top z_N
=
h\sum_{k=0}^{N-1}c^\top\Pi_{k+1}e_k(a)
=
h\sum_{k=0}^{N-1}(\Pi_{k+1}^\top c)^\top e_k(a)
=
h\sum_{k=0}^{N-1}s_k^\top e_k(a).
\]
This is exactly $\langle e(a),s\rangle_h$.
\end{proof}

\textbf{Intuition.}
The terminal error is not the sum of error magnitudes.
It is a signed sum: each per-step error $e_k$ is weighted by how sensitive the final output is to a perturbation at that step.
The step-local loss discards this sign and transport information; it only sees $\|e_k\|_2^2$.

\subsection{Step 2: Coherent errors accumulate; oscillatory errors cancel}
\label{ins:sec:step2}

By Lemma~\ref{ins:lem:pairing}, the terminal effect of a residual is the single number
\[
\Psi=\langle e,s\rangle_h.
\]
Whether the $N$ per-step contributions add or cancel depends on their alignment with $s$.

\begin{definition}[Coherent alignment]
\label{ins:def:coherence}
Let $\epsilon>0$ and $\mu\in(0,1]$.
We say that a residual $e$ is \emph{$(\mu,\epsilon)$-coherent} with respect to the terminal sensitivity $s$ if
\begin{equation}
\label{ins:eq:coherence}
\langle e,s\rangle_h
\ge
\mu\sum_{k=0}^{N-1}h\,\|s_k\|_2\,\|e_k\|_2,
\qquad
\|e_k\|_2\ge\epsilon
\quad\text{for every }k.
\end{equation}
\end{definition}

\noindent
The first inequality says that the per-step errors are not random jitter: they have a systematic component aligned with the direction that the terminal readout is sensitive to.
The second inequality says that the error is not concentrated on only a few steps.

\begin{proposition}[Accumulation versus cancellation]
\label{ins:prop:accumulation}
\label{prop:sparse_accumulation}
Let
\[
s_{\min}:=\min_k\|s_k\|_2,
\qquad
s_{\max}:=\max_k\|s_k\|_2,
\qquad
e_{\max}:=\max_k\|e_k\|_2.
\]
Then the terminal error $\Psi=\langle e,s\rangle_h$ satisfies the following.

\begin{enumerate}[label=(\alph*),leftmargin=1.9em,itemsep=0.25em]
\item \textbf{Upper bound.}
For every residual $e$,
\begin{equation}
\label{ins:eq:acc-upper}
|\Psi|
\le
\sum_{k=0}^{N-1}h\,\|s_k\|_2\,\|e_k\|_2
\le
s_{\max}\,e_{\max}\,T.
\end{equation}

\item \textbf{Coherent errors accumulate.}
If $e$ is $(\mu,\epsilon)$-coherent, then
\begin{equation}
\label{ins:eq:acc-lower}
\Psi
\ge
\mu\,s_{\min}\,\epsilon\,T.
\end{equation}

\item \textbf{Magnitude-only losses cannot distinguish accumulation from cancellation.}
There exist residuals with the same per-step magnitudes, hence the same step-local loss, but different terminal errors.
In the simplest constant-sensitivity case $s_k\equiv s_0$, alternating signs can make the terminal error vanish while keeping all $\|e_k\|_2$ fixed.
\end{enumerate}
\end{proposition}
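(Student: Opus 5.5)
The plan is to work entirely from the pairing identity of Lemma~\ref{ins:lem:pairing}, which writes the terminal error as $\Psi=\langle e,s\rangle_h=h\sum_{k}s_k^\top e_k$. Each part then reduces to elementary inequalities on this finite sum, and no further properties of the sampler are needed.

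For part (a), apply the triangle inequality to the sum and then Cauchy--Schwarz in $\mathbb R^d$ to each term. This gives $|\Psi|\le\sum_k h|s_k^\top e_k|\le\sum_k h\|s_k\|_2\|e_k\|_2$. Bounding each factor by its maximum and using $\sum_k h=Nh=T$ gives the second inequality.

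For part (b), start from the coherence inequality in Definition~\ref{ins:def:coherence}, which states $\Psi\ge\mu\sum_k h\|s_k\|_2\|e_k\|_2$. Lower-bound each summand by $h\,s_{\min}\epsilon$, using $\|s_k\|_2\ge s_{\min}$ and the floor $\|e_k\|_2\ge\epsilon$. Summing over the $N$ steps yields $\mu s_{\min}\epsilon T$. All terms are nonnegative, so no sign issues arise.

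For part (c), the argument is an explicit construction.
\begin{itemize}
\item Take $s_k\equiv s_0\neq0$ (for instance $\partial_x v_\star=0$, so every $A_k=I_d$ and $s_k=c$) and $N$ even.
\item Fix a unit vector $u$ and a level $\epsilon>0$.
\item Compare the coherent residual $e_k=\epsilon u$ with the alternating residual $e_k=(-1)^k\epsilon u$.
\end{itemize}
Both residuals have $\|e_k\|_2=\epsilon$ at every step, so $\mathcal L=\tfrac12\|e\|_h^2=\tfrac12\epsilon^2T$ is identical for the two. By the pairing identity, the first gives $\Psi=T\epsilon\,s_0^\top u$, which is nonzero when $u=s_0/\|s_0\|_2$; its value $\epsilon\|s_0\|_2T$ attains the bound in (b) with $\mu=1$. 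The alternating residual gives $\Psi=h\epsilon\,s_0^\top u\sum_k(-1)^k=0$.

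The only delicate point is making the constant-sensitivity case concrete, that is, exhibiting an admissible reference field with $s_k\equiv s_0$. Choosing a state-independent affine $v_\star$ settles this, since then $\Pi_j=I_d$. For odd $N$, one option is to rescale a single step so the alternating sum still vanishes, but then the magnitudes are not all equal. It is simpler to state the example for even $N$, since only existence is claimed.
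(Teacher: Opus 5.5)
Your proposal is correct and follows the paper's proof essentially step for step: the triangle inequality plus per-step Cauchy--Schwarz with $hN=T$ for (a), termwise lower bounds from the coherence definition for (b), and an alternating-sign construction with $N$ even for (c). The only difference is cosmetic. The paper takes $d=1$, $s_k\equiv 1$ directly, whereas you work in general $d$ along $u=s_0/\|s_0\|_2$ and realize constant sensitivity via $\partial_x v_\star=0$; this also shows that the coherent example attains the bound in (b) with $\mu=1$.
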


\begin{proof}
(a) By the triangle inequality and Cauchy--Schwarz at each step,
\[
|\Psi|
=
\left|h\sum_{k=0}^{N-1}s_k^\top e_k\right|
\le
h\sum_{k=0}^{N-1}\|s_k\|_2\,\|e_k\|_2.
\]
Using $\|s_k\|_2\le s_{\max}$, $\|e_k\|_2\le e_{\max}$, and $hN=T$ gives
\[
|\Psi|
\le
h\sum_{k=0}^{N-1}s_{\max}e_{\max}
=
s_{\max}e_{\max}T.
\]

(b) By coherence,
\[
\Psi
=
\langle e,s\rangle_h
\ge
\mu\sum_{k=0}^{N-1}h\,\|s_k\|_2\,\|e_k\|_2.
\]
Since $\|s_k\|_2\ge s_{\min}$ and $\|e_k\|_2\ge\epsilon$,
\[
\Psi
\ge
\mu\sum_{k=0}^{N-1}h\,s_{\min}\epsilon
=
\mu\,s_{\min}\epsilon\,T.
\]

(c) Take the simplest case $d=1$, $s_k\equiv1$, and $N$ even.
Let $e_k=\epsilon$ for even $k$ and $e_k=-\epsilon$ for odd $k$.
Then every step has the same magnitude $\|e_k\|_2=\epsilon$, so the step-local loss is the same as for the constant residual $e_k\equiv\epsilon$.
But
\[
h\sum_{k=0}^{N-1}e_k=0,
\]
whereas the constant residual gives terminal error $T\epsilon$.
Thus a loss that depends only on $\|e_k\|_2^2$ cannot distinguish the two cases.
\end{proof}

\textbf{Intuition.}
The step-local loss sees only the size of each per-step error.
It does not see whether those errors point in the same direction and accumulate, or alternate and cancel.
For a highly sparse model, the concern is not merely that the residual error is nonzero, but that it may be \emph{coherent}: systematically aligned with the terminal sensitivity.
Such an error grows with the sampling horizon, while an oscillatory error of the same magnitude may cancel.

\subsection{Step 3: Step-local training removes the reachable part and stops}
\label{ins:sec:step3}

We now analyze what step-local training can and cannot remove.

Decompose the baseline residual into a trainable part and an orthogonal leftover:
\[
e^0
=
P_{\mathcal U}e^0
+
P_{\mathcal U^\perp}e^0.
\]
Define the leftover residual
\begin{equation}
\label{ins:eq:leftover}
\ell:=P_{\mathcal U^\perp}e^0,
\qquad
\beta:=\|\ell\|_h.
\end{equation}
If $\beta>0$, write $\hat\ell:=\ell/\beta$.

\begin{lemma}[Step-local optimum leaves the orthogonal leftover]
\label{ins:lem:step-local-optimum}
There exists a parameter $a_{\rm sp}$ such that
\[
Ua_{\rm sp}=-P_{\mathcal U}e^0.
\]
For every such $a_{\rm sp}$,
\[
e(a_{\rm sp})=\ell.
\]
Moreover, for any $a$, writing $g:=U(a-a_{\rm sp})\in\mathcal U$, one has
\begin{equation}
\label{ins:eq:loss-decomposition}
\mathcal L(a)
=
\frac12\|\ell\|_h^2
+
\frac12\|g\|_h^2.
\end{equation}
Hence every step-local minimizer has the same residual $e(a)=\ell$, and the minimum step-local loss is
\[
\min_a\mathcal L(a)=\frac12\|\ell\|_h^2.
\]
\end{lemma}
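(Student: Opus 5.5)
Since $\mathcal U=\operatorname{range}(U)$ by \eqref{ins:eq:affine-residual}, the vector $-P_{\mathcal U}e^0$ lies in $\mathcal U$. By definition of the range there is therefore some coordinate vector $a_{\rm sp}$ with $Ua_{\rm sp}=-P_{\mathcal U}e^0$. This settles existence. In finite dimensions $\mathcal U$ is closed, so the orthogonal projection with respect to $\langle\cdot,\cdot\rangle_h$ is well defined and we never need a limiting argument. Any such $a_{\rm sp}$ then gives
\[
e(a_{\rm sp})=e^0+Ua_{\rm sp}=e^0-P_{\mathcal U}e^0=P_{\mathcal U^\perp}e^0=\ell,
\]
using the orthogonal decomposition $e^0=P_{\mathcal U}e^0+P_{\mathcal U^\perp}e^0$.

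For an arbitrary $a$, set $g:=U(a-a_{\rm sp})$. Since $g$ is the image of a vector under $U$, it lies in $\mathcal U$. By linearity of $U$,
\[
e(a)=e^0+Ua=e^0+Ua_{\rm sp}+g=\ell+g.
\]
Expanding the square gives
\[
\mathcal L(a)=\tfrac12\|\ell+g\|_h^2=\tfrac12\|\ell\|_h^2+\langle \ell,g\rangle_h+\tfrac12\|g\|_h^2.
\]
The cross term vanishes because $\ell\in\mathcal U^\perp$ and $g\in\mathcal U$. This is exactly \eqref{ins:eq:loss-decomposition}, which is the Pythagorean step.

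From the decomposition, $\mathcal L(a)\ge\frac12\|\ell\|_h^2$, with equality if and only if $\|g\|_h=0$. Since $\|\cdot\|_h$ is a genuine norm ($h>0$), this means $g=0$, which is the same as $e(a)=\ell$. The bound is attained at $a_{\rm sp}$, so $\min_a\mathcal L(a)=\frac12\|\ell\|_h^2$, and every minimizer has residual exactly $\ell$. The minimizer need not be unique in $a$ when $U$ has a kernel, but the residual is unique.

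No step is a real obstacle. The only point needing care is that $\ell$ does not depend on which $a_{\rm sp}$ is chosen. This holds because $\ell$ is defined directly as $P_{\mathcal U^\perp}e^0$, and we have shown that every valid $a_{\rm sp}$ produces that same residual.
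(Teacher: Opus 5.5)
Your proof is correct and follows the paper's argument essentially step for step: existence because $-P_{\mathcal U}e^0\in\operatorname{range}(U)$, then $e(a)=\ell+g$ with $\ell\perp g$, and the Pythagorean split of the loss, which is minimized exactly when $g=0$. Your side remarks (closedness of $\mathcal U$ in finite dimensions, $h>0$ making $\|\cdot\|_h$ a genuine norm, and non-uniqueness of $a_{\rm sp}$ when $U$ has a kernel) are accurate and only make explicit what the paper leaves implicit.
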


\begin{proof}
Since $P_{\mathcal U}e^0\in\mathcal U=\operatorname{range}(U)$, there exists $a_{\rm sp}$ with $Ua_{\rm sp}=-P_{\mathcal U}e^0$.
Then
\[
e(a_{\rm sp})=e^0+Ua_{\rm sp}
=
e^0-P_{\mathcal U}e^0
=
P_{\mathcal U^\perp}e^0
=
\ell.
\]
For any $a$, let $g=U(a-a_{\rm sp})$. Then
\[
e(a)=e(a_{\rm sp})+g=\ell+g,
\]
with $\ell\in\mathcal U^\perp$ and $g\in\mathcal U$.
By orthogonality,
\[
\|e(a)\|_h^2
=
\|\ell\|_h^2+\|g\|_h^2,
\]
which proves \eqref{ins:eq:loss-decomposition}.
Thus $\mathcal L(a)$ is minimized exactly when $g=0$, i.e.\ when $e(a)=\ell$.
\end{proof}

\textbf{Intuition.}
Step-local training removes exactly the part of the residual that lies in the trainable subspace $\mathcal U$.
It cannot remove the orthogonal leftover $\ell$.
At the step-local optimum, the loss gradient vanishes: every feasible parameter direction only increases the loss.
But this does not imply that the terminal error vanishes.

We now state the two conditions needed for the leftover to matter.

\begin{assumption}[Visibility and controllability]
\label{ins:as:visibility}
We assume:
\begin{enumerate}[label=(\Alph*),leftmargin=1.9em,itemsep=0.2em]
\item \textbf{Visibility.} The leftover is terminally visible:
\[
\beta>0,
\qquad
\langle \hat\ell,s\rangle_h\ge c_0>0.
\]
If $\langle \hat\ell,s\rangle_h<0$, replace $R$ by $-R$ once; the case $\langle \hat\ell,s\rangle_h=0$ is excluded.

\item \textbf{Controllability.} The terminal sensitivity has a reachable component:
$s_{\mathcal U}=P_{\mathcal U}s\ne0$.
\end{enumerate}
\end{assumption}

\noindent
Visibility says that the leftover contributes to the terminal readout rather than canceling inside it.
Controllability says that the trainable directions can influence the terminal readout at all.
If $s_{\mathcal U}=0$, then no parameter update in $\mathcal U$ can change the terminal functional.

\begin{proposition}[The terminal error survives the step-local optimum]
\label{ins:prop:survive}
Under Assumption~\ref{ins:as:visibility}, at any step-local minimizer $a_{\rm sp}$,
\begin{equation}
\label{ins:eq:surviving-terminal-error}
\Psi_{\rm sp}
:=
\Psi(a_{\rm sp})
=
\langle \ell,s\rangle_h
=
\langle \ell,s_{\mathcal U^\perp}\rangle_h
=
\beta\,\langle \hat\ell,s\rangle_h
\ge
c_0\beta
>
0.
\end{equation}
Moreover:
\begin{itemize}[leftmargin=1.4em,itemsep=0.2em]
\item the step-local loss has zero first-order variation at $a_{\rm sp}$;
\item the terminal functional has nonzero first-order variation at $a_{\rm sp}$ through the same trainable subspace $\mathcal U$.
\end{itemize}
Equivalently, the step-local gradient vanishes, but the gradient of $\frac12\Psi^2$ need not vanish.
\end{proposition}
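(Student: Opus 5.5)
The argument is a direct computation at the step-local minimizer, combining the Pairing identity (Lemma~\ref{ins:lem:pairing}) with Lemma~\ref{ins:lem:step-local-optimum}.

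\textbf{The chain of equalities.} By Lemma~\ref{ins:lem:step-local-optimum}, every step-local minimizer $a_{\rm sp}$ has residual $e(a_{\rm sp})=\ell$. Lemma~\ref{ins:lem:pairing} then gives $\Psi_{\rm sp}=\langle \ell,s\rangle_h$. For the second equality, split $s=s_{\mathcal U}+s_{\mathcal U^\perp}$ as in \eqref{ins:eq:sensitivity-split}. Since $\ell\in\mathcal U^\perp$, the term $\langle\ell,s_{\mathcal U}\rangle_h$ vanishes. For the third, write $\ell=\beta\hat\ell$, which is legitimate because $\beta>0$ by Visibility. The lower bound $\beta\langle\hat\ell,s\rangle_h\ge c_0\beta>0$ is then exactly Assumption~\ref{ins:as:visibility}(A).

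\textbf{First-order behaviour at $a_{\rm sp}$.} I will use the expansions \eqref{ins:eq:expansion} and \eqref{ins:eq:terminal-expansion}.

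\emph{Step-local loss.} The linear term in \eqref{ins:eq:expansion} is $\langle \ell,U\delta a\rangle_h$. This is zero for every $\delta a$, because $U\delta a\in\mathcal U$ and $\ell\perp\mathcal U$. Equivalently, $\nabla\mathcal L(a_{\rm sp})=U^*\ell=0$, where $U^*$ is the adjoint with respect to $\langle\cdot,\cdot\rangle_h$. This also follows directly from \eqref{ins:eq:loss-decomposition}.

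\emph{Terminal functional.} The first variation of $\Psi$ is $\langle U\delta a,s\rangle_h=\langle U\delta a,s_{\mathcal U}\rangle_h$. By Controllability, $s_{\mathcal U}\ne0$ lies in $\operatorname{range}(U)$, so there is $\delta a$ with $U\delta a=s_{\mathcal U}$. For this choice the variation equals $\|s_{\mathcal U}\|_h^2>0$, so $\nabla\Psi(a_{\rm sp})=U^*s\ne0$.

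\emph{Squared terminal error.} By the chain rule,
\[
\nabla\bigl(\tfrac12\Psi^2\bigr)(a_{\rm sp})=\Psi_{\rm sp}\,U^*s .
\]
This is a product of two nonzero factors: $\Psi_{\rm sp}\ge c_0\beta>0$ and $U^*s\ne0$. Hence the gradient of $\tfrac12\Psi^2$ does not vanish at $a_{\rm sp}$.

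\textbf{Main point of care.} The only subtlety is justifying $U^*s\ne0$ from $s_{\mathcal U}\ne0$. It holds because $\ker U^*=\mathcal U^\perp$, so $U^*s=U^*s_{\mathcal U}$, and $U^*$ is injective on $\mathcal U$. Everything else is orthogonality bookkeeping.
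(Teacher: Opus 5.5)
Your proposal is correct and matches the paper's proof step for step: $e(a_{\rm sp})=\ell$ combined with the pairing identity, the orthogonality $\langle\ell,s_{\mathcal U}\rangle_h=0$, the vanishing linear term $\langle\ell,U\delta a\rangle_h$ in \eqref{ins:eq:expansion}, and the test direction $U\delta a=s_{\mathcal U}$, which gives first variation $\|s_{\mathcal U}\|_h^2>0$. Your chain-rule formula $\nabla(\tfrac12\Psi^2)(a_{\rm sp})=\Psi_{\rm sp}U^*s\ne0$ slightly sharpens the statement's ``need not vanish'' (the paper itself records this in Proposition~\ref{ins:prop:terminal_descent}(1)), and your ``point of care'' is already settled by your directional computation, since $(\delta a)^\top U^*s=\|s_{\mathcal U}\|_h^2>0$ forces $U^*s\ne0$.
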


\begin{proof}
Since $e(a_{\rm sp})=\ell$, Lemma~\ref{ins:lem:pairing} gives
$\Psi_{\rm sp}=\langle \ell,s\rangle_h$.
Because $\ell\in\mathcal U^\perp$ and $s_{\mathcal U}\in\mathcal U$, the split \eqref{ins:eq:sensitivity-split} gives
$\langle \ell,s_{\mathcal U}\rangle_h=0$, hence
$\Psi_{\rm sp}=\langle \ell,s_{\mathcal U^\perp}\rangle_h$.
Using $\ell=\beta\hat\ell$ gives
$\Psi_{\rm sp}=\beta\langle \hat\ell,s\rangle_h\ge c_0\beta>0$.

For the loss, take any coordinate perturbation $\delta a$ and let $g=U\delta a\in\mathcal U$.
By \eqref{ins:eq:expansion}, the first-order change of $\mathcal L$ at $a_{\rm sp}$ is
$\langle \ell,g\rangle_h=0$, because $\ell\perp\mathcal U$.
Thus the step-local loss is stationary.

For the terminal functional, by \eqref{ins:eq:terminal-expansion},
\[
\Psi(a_{\rm sp}+\delta a)-\Psi(a_{\rm sp})
=
\langle U\delta a,s\rangle_h.
\]

Since $s_{\mathcal U}\ne0$ and $s_{\mathcal U}\in\mathcal U$, there exists a perturbation $\delta a$ with $U\delta a=s_{\mathcal U}$.
For this perturbation,
\[
\langle U\delta a,s\rangle_h
=
\langle s_{\mathcal U},s\rangle_h
=
\|s_{\mathcal U}\|_h^2
>
0.
\]
Hence the terminal functional has a nonzero first-order direction, while the step-local loss does not.
\end{proof}

\textbf{Intuition.}
At the step-local optimum, the model has already removed every error component that the trainable subspace can express.
The remaining component $\ell$ is invisible to further step-local training: the loss gradient is zero.
But if $\ell$ is aligned with the terminal sensitivity, it still produces a nonzero terminal error.
At the same time, the terminal functional can still be changed by moving in the same trainable subspace.
This mismatch---zero step-local gradient but nonzero terminal gradient---is the core of the high-sparsity trap in the surrogate.

\subsection{Step 4: Terminal-aligned correction and its surrogate price}
\label{ins:sec:step4}

We now ask: if the step-local optimum leaves a terminal-visible residual, how much step-local sub-optimality must be paid to cancel it?

Let $a_{\rm sp}$ be a step-local minimizer and define the step-local sub-optimality
\begin{equation}
\label{ins:eq:suboptimality}
\Delta(a)
:=
\mathcal L(a)-\mathcal L(a_{\rm sp}).
\end{equation}
By Lemma~\ref{ins:lem:step-local-optimum}, if $g:=U(a-a_{\rm sp})$, then
\begin{equation}
\label{ins:eq:delta-as-norm}
\Delta(a)=\frac12\|g\|_h^2,
\end{equation}
and as $a$ ranges over all parameters, $g$ ranges over all of $\mathcal U$.
Also, since $g\in\mathcal U$,
\begin{equation}
\label{ins:eq:terminal-shift}
\Psi(a)
=
\Psi_{\rm sp}
+
\langle g,s\rangle_h
=
\Psi_{\rm sp}
+
\langle g,s_{\mathcal U}\rangle_h.
\end{equation}

\begin{corollary}[Surrogate price of terminal cancellation]
\label{ins:cor:price}
\label{cor:price}
Under Assumption~\ref{ins:as:visibility}, the minimum step-local sub-optimality required to make the terminal functional vanish is
\begin{equation}
\label{ins:eq:price}
\mathcal C
:=
\min\{\Delta(a):\Psi(a)=0\}
=
\frac{\Psi_{\rm sp}^2}{2\,\|s_{\mathcal U}\|_h^2}
>
0.
\end{equation}
The minimum is attained exactly at those $a$ with $U(a-a_{\rm sp})=g^\star$, where
\begin{equation}
\label{ins:eq:optimal-direction}
g^\star
=
-\frac{\Psi_{\rm sp}}{\|s_{\mathcal U}\|_h^2}\,s_{\mathcal U}.
\end{equation}
Moreover, for every $a$,
\begin{equation}
\label{ins:eq:terminal-bound}
|\Psi(a)-\Psi_{\rm sp}|
\le
\|s_{\mathcal U}\|_h\sqrt{2\Delta(a)}.
\end{equation}
\end{corollary}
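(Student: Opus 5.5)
The corollary becomes a minimum-norm problem in the Hilbert space $\mathcal U$ once we change variables from $a$ to $g:=U(a-a_{\rm sp})$. By \eqref{ins:eq:delta-as-norm} and \eqref{ins:eq:terminal-shift}, $\Delta(a)=\frac12\|g\|_h^2$ and $\Psi(a)=\Psi_{\rm sp}+\langle g,s_{\mathcal U}\rangle_h$. As $a$ ranges over all parameters, $g$ ranges over all of $\mathcal U$, because $\mathcal U=\operatorname{range}(U)$. Hence
\[
\mathcal C=\min\Bigl\{\tfrac12\|g\|_h^2 : g\in\mathcal U,\ \langle g,s_{\mathcal U}\rangle_h=-\Psi_{\rm sp}\Bigr\},
\]
which is the squared distance from the origin to an affine hyperplane in $\mathcal U$. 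Assumption~\ref{ins:as:visibility}(B) gives $s_{\mathcal U}\neq0$, so this hyperplane is nonempty and genuinely a hyperplane. Proposition~\ref{ins:prop:survive} gives $\Psi_{\rm sp}\ge c_0\beta>0$, so the hyperplane does not pass through the origin.

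\textbf{Lower bound and attainment.} For any feasible $g$, Cauchy--Schwarz gives $|\Psi_{\rm sp}|=|\langle g,s_{\mathcal U}\rangle_h|\le\|g\|_h\|s_{\mathcal U}\|_h$. Therefore $\frac12\|g\|_h^2\ge \Psi_{\rm sp}^2/(2\|s_{\mathcal U}\|_h^2)$. Next I check that $g^\star$ in \eqref{ins:eq:optimal-direction} lies in $\mathcal U$, since it is a multiple of $s_{\mathcal U}$. It is feasible, because $\langle g^\star,s_{\mathcal U}\rangle_h=-\Psi_{\rm sp}$. It attains the bound, because $\|g^\star\|_h^2=\Psi_{\rm sp}^2/\|s_{\mathcal U}\|_h^2$. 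This yields the formula for $\mathcal C$, and $\mathcal C>0$ follows from $\Psi_{\rm sp}>0$.

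\textbf{Uniqueness of the minimizer.} The step needing the most care is the claim that the minimum is attained \emph{exactly} at $g=g^\star$. Equality in Cauchy--Schwarz forces $g=\lambda s_{\mathcal U}$ for some scalar $\lambda$, and the constraint then pins down $\lambda=-\Psi_{\rm sp}/\|s_{\mathcal U}\|_h^2$. A cleaner route is the orthogonal decomposition $g=g^\star+w$ with $w\perp s_{\mathcal U}$ inside $\mathcal U$, which every feasible $g$ admits. It gives $\|g\|_h^2=\|g^\star\|_h^2+\|w\|_h^2$, so the minimum is attained iff $w=0$. Translating back, the minimizers are precisely the $a$ with $U(a-a_{\rm sp})=g^\star$. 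This set is nonempty because $g^\star\in\operatorname{range}(U)$, though it need not be a single point if $U$ has a kernel.

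\textbf{The bound \eqref{ins:eq:terminal-bound}.} For arbitrary $a$, with $g=U(a-a_{\rm sp})$, \eqref{ins:eq:terminal-shift} and Cauchy--Schwarz give $|\Psi(a)-\Psi_{\rm sp}|=|\langle g,s_{\mathcal U}\rangle_h|\le\|s_{\mathcal U}\|_h\|g\|_h$. Substituting $\|g\|_h=\sqrt{2\Delta(a)}$ completes the proof. No single step is a serious obstacle; the only real care is justifying that the reparametrization $a\mapsto g$ is onto $\mathcal U$, and that one identity \eqref{ins:eq:terminal-shift} may use either $s$ or $s_{\mathcal U}$ because $g\in\mathcal U$.
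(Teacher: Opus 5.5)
Your proposal is correct and follows the paper's proof essentially step for step: reparametrize by $g=U(a-a_{\rm sp})$, get the lower bound from Cauchy--Schwarz, identify the equality case with $g^\star$ (whose feasibility uses $s_{\mathcal U}\in\mathcal U$), and apply Cauchy--Schwarz once more for \eqref{ins:eq:terminal-bound}. Your orthogonal-decomposition argument for uniqueness, and your remarks that $\mathcal C>0$ follows from $\Psi_{\rm sp}\ge c_0\beta>0$ and that the minimizing set in $a$ may contain more than one point, are small clarifications that the paper leaves implicit; one minor wording slip is that $\mathcal C$ is half the squared distance to the hyperplane, not the squared distance itself.
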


\begin{proof}
By \eqref{ins:eq:terminal-shift}, the constraint $\Psi(a)=0$ is
\[
\langle g,s_{\mathcal U}\rangle_h=-\Psi_{\rm sp}.
\]
By Cauchy--Schwarz,
\[
|\Psi_{\rm sp}|
=
|\langle g,s_{\mathcal U}\rangle_h|
\le
\|g\|_h\,\|s_{\mathcal U}\|_h,
\]
so any feasible $g$ satisfies
$\|g\|_h^2\ge\Psi_{\rm sp}^2/\|s_{\mathcal U}\|_h^2$.
Using $\Delta=\frac12\|g\|_h^2$ gives
\[
\Delta
\ge
\frac{\Psi_{\rm sp}^2}{2\,\|s_{\mathcal U}\|_h^2}.
\]
Equality holds if and only if $g$ is parallel to $s_{\mathcal U}$ and satisfies the constraint, i.e.\ $g=g^\star$; such $g$ is feasible because $s_{\mathcal U}\in\mathcal U$.
The corresponding step-local sub-optimality is
\[
\mathcal C
=
\frac12\|g^\star\|_h^2
=
\frac{\Psi_{\rm sp}^2}{2\,\|s_{\mathcal U}\|_h^2}.
\]
Finally, for any $a$, using \eqref{ins:eq:terminal-shift} and Cauchy--Schwarz,
\[
|\Psi(a)-\Psi_{\rm sp}|
=
|\langle g,s_{\mathcal U}\rangle_h|
\le
\|g\|_h\,\|s_{\mathcal U}\|_h
=
\|s_{\mathcal U}\|_h\sqrt{2\Delta(a)},
\]
which proves \eqref{ins:eq:terminal-bound}.
\end{proof}

\textbf{Intuition.}
The terminal constraint $\Psi(a)=0$ is a single linear constraint on the reachable residual correction $g\in\mathcal U$.
Among all corrections satisfying this constraint, the cheapest one in step-local loss is the one aligned with the reachable terminal sensitivity $s_{\mathcal U}$.
Its cost is exactly the squared ratio of the surviving terminal error to the amount of terminal sensitivity that training can reach.

\begin{proposition}[Terminal descent and the trade-off frontier]
\label{ins:prop:terminal_descent}
\label{prop:dm_descent}
Under Assumption~\ref{ins:as:visibility}, fix a step-local minimizer $a_{\rm sp}$ and let $\mathcal C$ and $g^\star$ be as in Corollary~\ref{ins:cor:price}.

\begin{enumerate}[label=(\arabic*),leftmargin=1.9em,itemsep=0.3em]
\item \textbf{Terminal descent exists where the step-local loss is flat.}
At $a_{\rm sp}$, the step-local loss has zero first-order variation.
However, the terminal squared error
$J(a):=\frac12\Psi(a)^2$
has a nonzero first-order descent direction through the same trainable subspace $\mathcal U$.

\item \textbf{Trade-off frontier.}
For every step-local budget $B\ge0$,
\begin{equation}
\label{ins:eq:frontier}
\min\bigl\{|\Psi(a)|:\Delta(a)\le B\bigr\}
=
\Bigl(
\Psi_{\rm sp}
-
\|s_{\mathcal U}\|_h\sqrt{2B}
\Bigr)_+.
\end{equation}
In particular, the terminal error decreases at first order in the parameter displacement, while the step-local loss increases only at second order.

\item \textbf{Optimal path.}
For $t\in[0,1]$, let $a(t)$ be any parameter satisfying
$U(a(t)-a_{\rm sp})=t\,g^\star$.
Then
\begin{equation}
\label{ins:eq:optimal-path}
\Psi(a(t))=(1-t)\Psi_{\rm sp},
\qquad
\Delta(a(t))=t^2\mathcal C,
\qquad
\|e(a(t))\|_h^2=\beta^2+2t^2\mathcal C.
\end{equation}
Thus the terminal error decreases linearly along this path, while the step-local cost increases quadratically.
\end{enumerate}
\end{proposition}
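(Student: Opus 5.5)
Everything reduces to the one-dimensional geometry already set up: by \eqref{ins:eq:delta-as-norm} and \eqref{ins:eq:terminal-shift}, as $a$ ranges over all parameters the displacement $g:=U(a-a_{\rm sp})$ ranges over all of $\mathcal U$, with $\Delta(a)=\frac12\|g\|_h^2$ and $\Psi(a)=\Psi_{\rm sp}+\langle g,s_{\mathcal U}\rangle_h$. So each claim will be restated as a statement about vectors $g\in\mathcal U$ and proved by Cauchy--Schwarz in $\langle\cdot,\cdot\rangle_h$, reusing Proposition~\ref{ins:prop:survive} and Corollary~\ref{ins:cor:price}.

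For (1), stationarity of $\mathcal L$ at $a_{\rm sp}$ is already in Proposition~\ref{ins:prop:survive}. For $J=\frac12\Psi^2$, I will compute the directional derivative along $\delta a$ with $U\delta a=g$: since $\Psi$ is affine in $a$, $\frac{d}{d\tau}J(a_{\rm sp}+\tau\delta a)\big|_{\tau=0}=\Psi_{\rm sp}\langle g,s_{\mathcal U}\rangle_h$. Choosing $g=-s_{\mathcal U}$ (reachable since $s_{\mathcal U}\in\mathcal U$) gives $-\Psi_{\rm sp}\|s_{\mathcal U}\|_h^2<0$, using $\Psi_{\rm sp}\ge c_0\beta>0$ and controllability $s_{\mathcal U}\ne0$.

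For (2), I will show the two inequalities. Lower bound: any $a$ with $\Delta(a)\le B$ has $\|g\|_h\le\sqrt{2B}$, so by \eqref{ins:eq:terminal-bound} $\Psi(a)\ge\Psi_{\rm sp}-\|s_{\mathcal U}\|_h\sqrt{2B}$, hence $|\Psi(a)|\ge(\Psi_{\rm sp}-\|s_{\mathcal U}\|_h\sqrt{2B})_+$ (trivial when the bracket is negative). Attainment: split into cases. If $\|s_{\mathcal U}\|_h\sqrt{2B}<\Psi_{\rm sp}$, take $g=-\sqrt{2B}\,s_{\mathcal U}/\|s_{\mathcal U}\|_h$, which has $\Delta=B$ and gives $\Psi=\Psi_{\rm sp}-\|s_{\mathcal U}\|_h\sqrt{2B}$. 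Otherwise $B\ge\mathcal C$, so $g=g^\star$ from Corollary~\ref{ins:cor:price} is feasible and yields $\Psi=0$. The remark on orders then follows by reading off that along $g=\tau u$ the terminal error changes linearly in $\tau$ while $\Delta=\tau^2\|u\|_h^2/2$ is quadratic.

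For (3), with $g=tg^\star$: $\Psi(a(t))=\Psi_{\rm sp}+t\langle g^\star,s_{\mathcal U}\rangle_h=\Psi_{\rm sp}-t\Psi_{\rm sp}$ by \eqref{ins:eq:optimal-direction}; $\Delta=\frac12t^2\|g^\star\|_h^2=t^2\mathcal C$; and $\|e(a(t))\|_h^2=2\mathcal L(a(t))=\|\ell\|_h^2+\|tg^\star\|_h^2=\beta^2+2t^2\mathcal C$ by \eqref{ins:eq:loss-decomposition}. There is no real obstacle here. The only step requiring care is the attainment half of (2), where the budget regime must be split at $B=\mathcal C$ and the sign of $\Psi_{\rm sp}>0$ must be used so that the positive part is exact.
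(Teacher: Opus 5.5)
Your proposal is correct and follows the paper's proof essentially step for step. You use the same reduction to $g=U(a-a_{\rm sp})\in\mathcal U$, the same descent direction $U\delta a=-s_{\mathcal U}$ for (1), Cauchy--Schwarz plus an explicit attaining correction along $-s_{\mathcal U}$ for (2), and the same direct computation along $tg^\star$ for (3). The only cosmetic difference is that you split attainment in (2) into the cases $B<\mathcal C$ and $B\ge\mathcal C$, whereas the paper covers both with the single choice $g_B=-\min\{\sqrt{2B},\Psi_{\rm sp}/\|s_{\mathcal U}\|_h\}\,s_{\mathcal U}/\|s_{\mathcal U}\|_h$.
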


\begin{proof}
(1)
By Lemma~\ref{ins:lem:step-local-optimum}, the first-order variation of $\mathcal L$ at $a_{\rm sp}$ vanishes because the residual $\ell$ is orthogonal to every reachable correction $g\in\mathcal U$.
For the terminal functional, choose a perturbation $\delta a$ such that $U\delta a=-s_{\mathcal U}$; such a perturbation exists because $s_{\mathcal U}\in\mathcal U$.
Then, using \eqref{ins:eq:terminal-shift},
\[
\frac{d}{dt}\Big|_{t=0}\Psi(a_{\rm sp}+t\,\delta a)
=
\langle -s_{\mathcal U},s\rangle_h
=
-\|s_{\mathcal U}\|_h^2
<0.
\]
Since $\Psi_{\rm sp}>0$, the derivative of $J=\frac12\Psi^2$ along $\delta a$ is
$\Psi_{\rm sp}\cdot(-\|s_{\mathcal U}\|_h^2)<0$.
Hence $J$ has a first-order descent direction, while $\mathcal L$ does not.

(2)
For any $a$ with $\Delta(a)\le B$, write $g=U(a-a_{\rm sp})$.
Then $\frac12\|g\|_h^2\le B$, so $\|g\|_h\le\sqrt{2B}$.
By \eqref{ins:eq:terminal-shift} and Cauchy--Schwarz,
\[
|\Psi(a)-\Psi_{\rm sp}|
=
|\langle g,s_{\mathcal U}\rangle_h|
\le
\|g\|_h\,\|s_{\mathcal U}\|_h
\le
\|s_{\mathcal U}\|_h\sqrt{2B},
\]
so $|\Psi(a)|\ge\Psi_{\rm sp}-\|s_{\mathcal U}\|_h\sqrt{2B}$, and since $|\Psi(a)|\ge0$,
\[
|\Psi(a)|
\ge
\Bigl(
\Psi_{\rm sp}
-
\|s_{\mathcal U}\|_h\sqrt{2B}
\Bigr)_+.
\]
For achievability, set
\[
g_B
=
-\min\left\{
\sqrt{2B},\
\frac{\Psi_{\rm sp}}{\|s_{\mathcal U}\|_h}
\right\}
\frac{s_{\mathcal U}}{\|s_{\mathcal U}\|_h}
\ \in\mathcal U,
\]
and let $a_B$ be any parameter with $U(a_B-a_{\rm sp})=g_B$.   
Then $\Delta(a_B)=\frac12\|g_B\|_h^2\le B$ and, by \eqref{ins:eq:terminal-shift},
\[
\Psi(a_B)
=
\Psi_{\rm sp}
-
\min\bigl\{
\|s_{\mathcal U}\|_h\sqrt{2B},\
\Psi_{\rm sp}
\bigr\},
\]
which attains \eqref{ins:eq:frontier}.

(3)
Let $g=t g^\star$.
From \eqref{ins:eq:terminal-shift} and \eqref{ins:eq:optimal-direction},
\[
\Psi(a(t))
=
\Psi_{\rm sp}
+
t\langle g^\star,s_{\mathcal U}\rangle_h
=
\Psi_{\rm sp}
-
t\Psi_{\rm sp}
=
(1-t)\Psi_{\rm sp}.
\]
Also $\Delta(a(t))=\frac12\|t g^\star\|_h^2=t^2\mathcal C$.
Finally, since $e(a(t))=\ell+t g^\star$ with $\ell\in\mathcal U^\perp$ and $g^\star\in\mathcal U$,
\[
\|e(a(t))\|_h^2
=
\|\ell\|_h^2+t^2\|g^\star\|_h^2
=
\beta^2+2t^2\mathcal C.
\]
\end{proof}

\begin{figure}[t]
\centering
\begin{tikzpicture}[x=3.4cm,y=1.5cm,>=Latex,font=\small,line join=round]
\fill[green!10] (0,0) rectangle (0.25,1.06);
\node[green!45!black,font=\scriptsize,rotate=90,anchor=center] at (0.125,0.60)
      {cheap regime};
\draw[black!50,->] (0,0)--(1.10,0);
\draw[black!50,->] (0,0)--(0,1.18);
\draw[red!70!black,line width=1.1pt] (0,1)--(1,0);
\draw[blue!65!black,line width=1.1pt]
 (0,0) -- (0.1,0.01) -- (0.2,0.04) -- (0.3,0.09) -- (0.4,0.16) -- (0.5,0.25)
 -- (0.6,0.36) -- (0.7,0.49) -- (0.8,0.64) -- (0.9,0.81) -- (1.0,1.00);
\draw[black!50] (0.5,0)--(0.5,-0.04);
\node[font=\scriptsize,anchor=north,inner sep=1pt] at (0.5,-0.03) {$\tfrac12$};
\draw[black!50] (1,0)--(1,-0.04);
\node[font=\scriptsize,anchor=north,inner sep=1pt] at (1,-0.03) {$1$};
\node[font=\scriptsize,anchor=north,inner sep=1pt] at (0.52,-0.24)
      {displacement fraction $t$};
\node[draw=black!45,fill=black!3,rounded corners=2pt,inner sep=4pt,
      font=\scriptsize,align=left,text width=5.6cm,anchor=north west] at (1.18,1.18)
 {\textcolor{red!70!black}{\rule{10pt}{1.2pt}}~$|\Psi(a(t))|/\Psi_{\rm sp}=1-t$\\
  \hspace*{1.0em}(linear: first-order gain)\\[3pt]
  \textcolor{blue!65!black}{\rule{10pt}{1.2pt}}~$\Delta(a(t))/\mathcal C=t^{2}$\\
  \hspace*{1.0em}(quadratic: second-order cost)};
\end{tikzpicture}
\caption{
\textbf{The exchange rate inside the surrogate}
(Corollary~\ref{ins:cor:price}, Proposition~\ref{ins:prop:terminal_descent}).
Along the optimal path the terminal error falls \emph{linearly} in the displacement fraction $t$
while the step-local loss rises only \emph{quadratically}; the residual norm
$\|e(a(t))\|_h^2=\beta^2+2t^2\mathcal C$ follows the same quadratic.
This is the formal reason for a \emph{staged} recipe rather than a single joint objective.
Both axes are surrogate quantities: neither is FID, sliced-$W_2$, or wall-clock training cost.
}
\label{fig:ins-frontier}
\end{figure}

\textbf{Intuition.}
At the step-local optimum, the step-local loss is locally flat: moving a small amount costs only quadratically in $\mathcal L$.
But the terminal error changes linearly with the same move.
Therefore a terminal-aligned signal can obtain a first-order reduction in terminal error while paying only a second-order step-local price.
This is the formal reason for the staged recipe:
first fit the step-local objective, then spend a small amount of step-local sub-optimality on terminal alignment.

\subsection{Step 5: Cancellation is not removal}
\label{ins:sec:step5}

The previous step shows that terminal alignment can reduce the terminal error.
However, it does not repair the underlying velocity error.

\begin{corollary}[Residual inflation]
\label{ins:cor:inflation}
Under Assumption~\ref{ins:as:visibility}, let $a$ satisfy $\Psi(a)=0$.
Writing $g=U(a-a_{\rm sp})$, one has
\begin{equation}
\label{ins:eq:inflation}
\|e(a)\|_h^2
=
\beta^2+\|g\|_h^2
\ge
\beta^2+2\mathcal C,
\end{equation}
with equality if and only if $g=g^\star$.
Along the optimal path $g=t g^\star$, the terminal error decreases monotonically,
$|\Psi(a(t))|=(1-t)\Psi_{\rm sp}$,
while the residual norm increases monotonically,
$\|e(a(t))\|_h^2=\beta^2+2t^2\mathcal C$.
\end{corollary}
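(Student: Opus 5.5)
The plan is to reduce everything to the orthogonal decomposition already established in Lemma~\ref{ins:lem:step-local-optimum} and the price computation of Corollary~\ref{ins:cor:price}. Fix a step-local minimizer $a_{\rm sp}$. For any parameter $a$, set $g:=U(a-a_{\rm sp})\in\mathcal U$. By Lemma~\ref{ins:lem:step-local-optimum}, $e(a)=\ell+g$ with $\ell\in\mathcal U^\perp$. Pythagoras in $\langle\cdot,\cdot\rangle_h$ then gives $\|e(a)\|_h^2=\beta^2+\|g\|_h^2$ directly. This is the identity part of \eqref{ins:eq:inflation}, and it holds for every $a$, not only feasible ones.

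For the inequality, use the constraint $\Psi(a)=0$. By \eqref{ins:eq:terminal-shift}, it is equivalent to $\langle g,s_{\mathcal U}\rangle_h=-\Psi_{\rm sp}$. Cauchy--Schwarz then yields $\|g\|_h^2\ge\Psi_{\rm sp}^2/\|s_{\mathcal U}\|_h^2=2\mathcal C$, where $\mathcal C$ is the value in \eqref{ins:eq:price}. Equivalently, one can simply quote Corollary~\ref{ins:cor:price}: $\Delta(a)=\frac12\|g\|_h^2\ge\mathcal C$ on the feasible set. Here Assumption~\ref{ins:as:visibility}(B) guarantees $s_{\mathcal U}\neq0$, so the division is legitimate. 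Part (A) guarantees $\Psi_{\rm sp}>0$, so $\mathcal C>0$ and the inflation is strict relative to $\beta^2$.

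The equality case is the only step needing care. Equality in Cauchy--Schwarz forces $g=\lambda s_{\mathcal U}$ for some scalar $\lambda$. The constraint then pins down $\lambda=-\Psi_{\rm sp}/\|s_{\mathcal U}\|_h^2$, so $g=g^\star$ from \eqref{ins:eq:optimal-direction}. Conversely, $g^\star$ is feasible and attains $\|g^\star\|_h^2=2\mathcal C$. I expect this to be the main subtlety, but it is already settled by the uniqueness statement in Corollary~\ref{ins:cor:price}, so I will cite that rather than redo it.

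For the path statement, I invoke Proposition~\ref{ins:prop:terminal_descent}(3) verbatim. It gives $\Psi(a(t))=(1-t)\Psi_{\rm sp}$ and $\|e(a(t))\|_h^2=\beta^2+2t^2\mathcal C$ for $t\in[0,1]$. Since $\Psi_{\rm sp}>0$, we have $|\Psi(a(t))|=(1-t)\Psi_{\rm sp}$, which is strictly decreasing in $t$. Since $\mathcal C>0$, the map $t\mapsto\beta^2+2t^2\mathcal C$ is strictly increasing on $[0,1]$. Both monotonicity claims therefore follow, with the endpoint $t=1$ recovering the equality case above.
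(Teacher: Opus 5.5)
Your proposal is correct and follows the paper's route: the Pythagorean split $e(a)=\ell+g$ from Lemma~\ref{ins:lem:step-local-optimum}, the lower bound and uniqueness of $g^\star$ from Corollary~\ref{ins:cor:price}, and the path identities from Proposition~\ref{ins:prop:terminal_descent}(3). You also spell out that $\Psi_{\rm sp}>0$ and $\mathcal C>0$ make both monotonicity claims strict on $[0,1]$, which the paper leaves implicit.
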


\begin{proof}
Since $e(a)=\ell+g$ with $\ell\in\mathcal U^\perp$ and $g\in\mathcal U$,
\[
\|e(a)\|_h^2
=
\|\ell\|_h^2+\|g\|_h^2
=
\beta^2+\|g\|_h^2.
\]
Among all $g\in\mathcal U$ with $\Psi(a)=0$, Corollary~\ref{ins:cor:price} shows that the minimum possible $\frac12\|g\|_h^2$ is $\mathcal C$, attained uniquely by $g=g^\star$.
Thus $\|e(a)\|_h^2\ge\beta^2+2\mathcal C$.
The statements along the optimal path follow directly from \eqref{ins:eq:optimal-path}.
\end{proof}

\textbf{Intuition.}
Terminal alignment does not remove the velocity error.
It adds a compensating component whose terminal effect cancels the terminal-visible projection of the leftover error.
As a result, the per-step residual becomes larger, even though the terminal readout becomes smaller.
The correct wording is therefore:
\emph{terminal alignment cancels the terminal-visible projection of the error}, not
\emph{terminal alignment removes the error}.

\subsection{Scope and limitations}
\label{ins:sec:scope}

\begin{remark}[What is and is not established]
\label{ins:rem:scope}
\label{rem:insight_scope}
The preceding argument is a theorem about the surrogate model.
It establishes three mechanism-level statements.

\begin{itemize}[leftmargin=1.4em,itemsep=0.25em]
\item[(a)] Coherent per-step velocity errors can accumulate into a terminal error, while oscillatory errors of the same magnitude can cancel. A magnitude-based step-local loss cannot distinguish these cases.

\item[(b)] At a step-local optimum, the remaining terminal error can be strictly positive even though the step-local gradient vanishes. If the leftover is terminally visible and the terminal sensitivity has a reachable component, then a terminal-aligned signal can reduce this error through the same trainable subspace.

\item[(c)] Cancelling the terminal-visible projection of the error costs a computable surrogate step-local price and enlarges the residual norm. It is cancellation, not repair.
\end{itemize}

The surrogate is deliberately idealized.
It is not a theorem about nonlinear sparse video diffusion transformers, optimizer trajectories, or concrete distillation losses.
The terminal quantity $\Psi$ is a single scalar functional of the terminal state; it is not FID, sliced-$W_2$, VBench, or any perceptual metric.
No specific distribution-matching or consistency loss is verified to descend $\Psi$.
Therefore the main text should read this appendix as motivation for a terminal-aligned stage, not as a guarantee about any particular generation metric.
\end{remark}

\subsection{A closed-form illustration}
\label{subsec:insight_toy}

We give a small exact example to make the geometry concrete.
Let
\[
d=1,\qquad N=3,\qquad h=\frac13,\qquad T=1,
\]
and take
\[
v_\star(x,t)=3x,
\qquad
R(x)=x.
\]
Then each one-step Jacobian is $A_k=1+3h=2$, the terminal sensitivities are
\[
s=(4,2,1),
\]
and with the weighted inner product $\langle a,b\rangle_h=\frac13\sum_k a_k b_k$,
\[
\|s\|_h^2
=
\frac13(16+4+1)
=
7.
\]

\paragraph{One trainable direction.}
Let the trainable subspace be spanned by $\phi=(1,1,-2)$, so that
\[
\|\phi\|_h^2
=
\frac13(1+1+4)
=
2.
\]
Let the baseline residual be
\[
e^0=\beta\hat\ell+\gamma\phi,
\qquad
\hat\ell=(1,1,1),
\qquad \beta>0 .
\]                                        
Since $\|\hat\ell\|_h=1$ and $\langle\hat\ell,\phi\rangle_h=0$, the step-local leftover is
$\ell=\beta\hat\ell$.
The surviving terminal error is
\[
\Psi_{\rm sp}
=
\langle \ell,s\rangle_h
=
\frac{\beta}{3}(4+2+1)
=
\frac73\beta,
\]
so the visibility constant is $c_0=\langle\hat\ell,s\rangle_h=\frac73$.
The reachable part of the terminal sensitivity is
\[
s_{\mathcal U}
=
\frac{\langle s,\phi\rangle_h}{\|\phi\|_h^2}\phi
=
\frac{4/3}{2}\phi
=
\frac23\phi,
\qquad
\|s_{\mathcal U}\|_h^2
=
\frac49\cdot2
=
\frac89.
\]
The surrogate price is therefore
\[
\mathcal C
=
\frac{\Psi_{\rm sp}^2}{2\|s_{\mathcal U}\|_h^2}
=
\frac{(7/3)^2\beta^2}{2\cdot 8/9}
=
\frac{49}{16}\beta^2,
\]
and the optimal correction is
\[
g^\star
=
-\frac{\Psi_{\rm sp}}{\|s_{\mathcal U}\|_h^2}s_{\mathcal U}
=
-\frac{(7/3)\beta}{8/9}\cdot\frac23\phi
=
-\frac74\beta\,\phi .
\]                                        
Thus the corrected residual is
\[
e^\star
=
\beta(1,1,1)-\frac74\beta(1,1,-2)
=
\beta\left(-\frac34,-\frac34,\frac92\right),
\]
and one checks directly that
\[
\langle e^\star,s\rangle_h
=
\frac{\beta}{3}\left(4\cdot\Bigl(-\frac34\Bigr)+2\cdot\Bigl(-\frac34\Bigr)+1\cdot\frac92\right)
=0,
\qquad
\|e^\star\|_h^2
=
\beta^2+2\mathcal C
=
\frac{57}{8}\beta^2 .
\]                                        
The terminal error has been cancelled, but the residual norm has increased.
Along the path of Proposition~\ref{ins:prop:terminal_descent}(3), at $t=\frac12$ the terminal error
is halved for one quarter of the price, $\Delta=\frac{49}{64}\beta^2$.

\paragraph{Two trainable directions.}
Now add a second trainable direction $\psi=(1,-1,0)$, so that
\[
\mathcal U=\operatorname{span}\{\phi,\psi\},
\qquad
\mathcal U^\perp=\operatorname{span}\{(1,1,1)\},
\]
and the leftover is still $\ell=\beta(1,1,1)$, hence $\Psi_{\rm sp}=\frac73\beta$ as before.
The reachable part of $s$ is now
\[
s_{\mathcal U}
=
s-\langle s,\hat\ell\rangle_h\,\hat\ell
=
\left(\frac53,-\frac13,-\frac43\right),
\qquad
\|s_{\mathcal U}\|_h^2
=
\frac{14}{9},
\]
so the price becomes
\[
\mathcal C
=
\frac{(7/3)^2\beta^2}{2\cdot 14/9}
=
\frac{7}{4}\beta^2,
\]
and the optimal correction is
\[
g^\star
=
-\frac{\Psi_{\rm sp}}{\|s_{\mathcal U}\|_h^2}s_{\mathcal U}
=
-\frac{(7/3)\beta}{14/9}\,s_{\mathcal U}
=
-\frac32\beta\,s_{\mathcal U}
=
\beta\left(-\frac52,\frac12,2\right).
\]                                        
The corrected residual is therefore
\[
e^\star
=
\beta(1,1,1)+g^\star
=
\beta\left(-\frac32,\frac32,3\right),
\qquad
\|e^\star\|_h^2
=
\beta^2+2\mathcal C
=
\frac92\beta^2 .
\]

In this two-dimensional case, the constraint $\Psi(a)=0$ is still one linear equation, but the trainable subspace has dimension two.
Therefore there are infinitely many corrections $g\in\mathcal U$ that cancel the terminal error.
Among them, $g^\star$ is the unique correction with minimum step-local cost.
Any additional component in $\mathcal U$ orthogonal to $s_{\mathcal U}$ leaves the terminal error unchanged but increases the residual norm and the step-local sub-optimality.

\textbf{Summary of the illustration.}
This example shows the three core mechanisms in exact arithmetic:
\begin{itemize}[leftmargin=1.4em,itemsep=0.2em]
\item the step-local optimum leaves a nonzero terminal error $\Psi_{\rm sp}$;
\item a terminal-aligned correction through the same trainable subspace can cancel $\Psi_{\rm sp}$;
\item the cancellation costs a finite surrogate step-local price $\mathcal C$ and enlarges the residual norm.
\end{itemize}
Thus the appendix gives a closed-form picture of why a terminal-aligned stage can help after step-local sparse training, while also making clear that it cancels the terminal-visible projection of the error rather than removing the underlying velocity error.

\section{Generality across Trainable Sparse Designs}
\label{app:generality}

We verify that the high-sparsity trap is not specific to the RoLA selector, and that strong baseline recipes do not survive $97\%$ sparsity.
All experiments use the Wan2.1-T2V-14B-480P testbed of Sec.~\ref{sec:trap}: the same dense checkpoint, the same enlarged training set, and the same evaluation protocol (terminal error and VBench at $480\times832$).

\textbf{Step-local training across selectors.}
We instantiate two representative trainable sparse branches---VSA-style block selection~\citep{zhang2026faster} and SLA-style sparse--linear attention~\citep{zhang2026sla}---and train them with the step-local flow-matching loss~\eqref{eq:fm-loss} at $90\%$/$95\%$/$97\%$ sparsity under the extended budget of Sec.~\ref{sec:trap} (10{,}000 steps).

\textbf{Baseline recipe at $97\%$.}
We retrain the full FastWan (VSA) recipe~\citep{zhang2026faster}---joint sparse-attention training with DMD-style distillation---at $97\%$ sparsity from the same dense checkpoint, changing only the sparsity level in its official training configuration.

\textbf{Staged recipe across selectors.}
We apply the Stage-2 trajectory-mixed distillation of Sec.~\ref{subsec:distill} to the step-local VSA- and SLA-style $97\%$ checkpoints without modifying their sparse branches.

Table~\ref{tab:generality} shows the same trap signatures as Sec.~\ref{sec:trap} on both selectors: the step-local validation loss plateaus at a low value at every sparsity level, while the terminal error and VBench degrade sharply at $95\%$--$97\%$.
SLA attains a consistently lower validation loss than VSA at matched sparsity, yet its terminal error is only marginally smaller---a lower step-local loss does not translate into better terminal quality.
Retraining the full FastWan recipe at $97\%$ degrades sharply relative to its official $90\%$ operating point, while \method at the same $97\%$ sparsity remains close to the dense model. 
The terminal-aligned Stage-2 stage restores terminal quality on both selectors, confirming that the trap is induced by step-local supervision rather than by any specific sparse design, and that the staged recipe is selector-agnostic.

\begin{table}[htbp]
\centering
\small
\setlength{\tabcolsep}{6pt}
\renewcommand{\arraystretch}{1.15}
\caption{
The high-sparsity trap across trainable sparse designs (Wan2.1-T2V-14B-480P).
Step-local-only rows are trained from the same dense checkpoint with the flow-matching loss~\eqref{eq:fm-loss} under the extended 10{,}000-step budget of Sec.~\ref{sec:trap}; terminal error is the paired latent MSE of Sec.~\ref{sec:trap}.
``FastWan recipe'' retrains the official VSA training pipeline~\citep{zhang2026faster} with only the sparsity level changed to $97\%$, and is evaluated under its official inference settings; ``$+$\,Stage~2'' applies our trajectory-mixed distillation to the corresponding $97\%$ checkpoint.
}
\label{tab:generality}
\arrayrulecolor{SparkDiffusionRed}
\begin{tabular}{llccc}
\toprule[1.25pt]
\rowcolor{SparkDiffusionHeader}
\textbf{Sparse branch / recipe} & \textbf{Sp.} & \textbf{Val loss} $\downarrow$ & \textbf{Term.\ error} $\downarrow$ & \textbf{VBench} $\uparrow$ \\
\midrule[0.65pt]
VSA-style, step-local & $90\%$ & 0.0892 & 0.1205 & 81.75 \\ 
VSA-style, step-local & $95\%$ & 0.0947 & 0.1232 & 80.89 \\ 
VSA-style, step-local & $97\%$ & 0.0998 & 0.1255 & 79.72 \\ 
SLA-style, step-local & $90\%$ & 0.0871 & 0.1203 & 81.87 \\ 
SLA-style, step-local & $95\%$ & 0.0932 & 0.1227 & 80.94 \\ 
SLA-style, step-local & $97\%$ & 0.0979 & 0.1251 & 79.84 \\ 
\midrule[0.4pt]
FastWan (VSA) recipe, retrained & $97\%$ & 0.0996 & 0.1198 & 80.23 \\ 
\rowcolor{SparkDiffusionLight}
VSA-style $+$ Stage 2 & $97\%$ & 0.0995 & 0.1001 & 82.45 \\ 
\rowcolor{SparkDiffusionLight}
SLA-style $+$ Stage 2 & $97\%$ & 0.0977 & 0.0987 & 82.51 \\ 
\rowcolor{SparkDiffusionLight}
RoLA $+$ Stage 2 (\method) & $97\%$ & 0.0953 & 0.0921 & 82.99 \\ 
\bottomrule[1.25pt]
\end{tabular}
\arrayrulecolor{black}
\end{table}
\section{Training Details}
\label{app:training_details}

This section reports the training configurations of \method for the primary backbone Wan2.1-T2V-14B (stages defined in Sec.\ref{sec:method}; Stage~2 initializes from the Stage~1 checkpoint), together with hardware, wall-clock time, and GPU hours.

\subsection{Stage 1: Sparse Warm-up}
\label{app:stage1_training}

Stage~1 starts from the pretrained dense Wan2.1-T2V-14B checkpoint and
replaces dense self-attention with the compensated sparse attention
of Sec.~\ref{subsec:sparse}, optimized with the native flow-matching
objective alone (no auxiliary or layerwise losses).

Table~\ref{tab:stage1_training_config} summarizes the data configuration,
sparse-attention architecture, optimization hyperparameters, and training
cost used in this stage.

\begin{table}[H]
\centering
\small
\setlength{\tabcolsep}{7pt}
\renewcommand{\arraystretch}{1.16}

\caption{
Training configuration of Stage~1 sparse warm-up on Wan2.1-T2V-14B.
}
\label{tab:stage1_training_config}

\arrayrulecolor{SparkDiffusionRed}

\begin{tabular*}{\linewidth}{
@{\extracolsep{\fill}}
p{0.34\linewidth}
p{0.58\linewidth}
@{}
}

\toprule[1.25pt]

\rowcolor{SparkDiffusionHeader}
\textbf{Hyperparameter}
&
\textbf{Setting}
\\

\midrule[0.65pt]

\rowcolor{SparkDiffusionLight}
\multicolumn{2}{l}{
\textcolor{SparkDiffusionRed}{\textbf{Model \& Data}}
}
\\

Backbone
& Wan2.1-T2V-14B \\

Training dataset
& OpenVid subset~\citep{nan2025openvid} \\

Number of training videos
& 2{,}000 \\

Video setting
& $480\times832$ (480P), $720\times1280$ (720P), 81 frames \\

\addlinespace[1pt]
\rowcolor{SparkDiffusionLight}
\multicolumn{2}{l}{
\textcolor{SparkDiffusionRed}{\textbf{Sparse Architecture}}
}
\\

Attention sparsity
& 97\% \\

Sparse block size
& $64\times64$ \\

Compensation rank
& 64 \\

Trainable modules
& Full DiT backbone (including our low-rank compensation modules and gating parameters) \\

Frozen modules
& VAE and text encoder \\

\addlinespace[1pt]
\rowcolor{SparkDiffusionLight}
\multicolumn{2}{l}{
\textcolor{SparkDiffusionRed}{\textbf{Optimization}}
}
\\

Training objective
& Flow matching \\

Optimizer
& AdamW \\

Learning rate
& $2\times10^{-6}$ (backbone);
$5\times10^{-5}$ (low-rank projections / gate bias);
$3\times10^{-5}$ (gate projection) \\

Weight decay
& $1\times10^{-5}$ (backbone); $0$ (others) \\

LR schedule
& Cosine decay \\

Training epochs
& 4 ($\approx$250 steps) \\

Global batch size
& 32 \\

Gradient clipping
& 1.0 \\

Training precision
& BF16 \\

Gradient checkpointing
& Block-wise activation checkpointing \\

Random seed
& 0 \\

\addlinespace[1pt]
\rowcolor{SparkDiffusionLight}
\multicolumn{2}{l}{
\textcolor{SparkDiffusionRed}{\textbf{Compute}}
}
\\

Hardware
& $8\times$ NVIDIA H100 80GB SXM \\

Parallelism
& FSDP full-shard \\

Wall-clock training time
& $\approx$2.9 hours \\

GPU Hours
& $\approx$23 GPU hours \\

\bottomrule[1.25pt]

\end{tabular*}

\arrayrulecolor{black}
\end{table}

\subsection{Stage 2: Trajectory-Mixed Distillation}
\label{app:stage2_training}

Stage~2 initializes the student from the Stage~1 checkpoint and freezes the dense Wan2.1-T2V-14B model as the teacher, following the CrossDistill schedule of Sec.~\ref{subsec:distill}: high-noise PCM-style consistency, low-noise DMD-style distribution matching, and alternating student/critic updates.

Table~\ref{tab:stage2_training_config} reports the complete distillation
schedule, distribution-matching configuration, optimization
hyperparameters, and computational cost.

\begin{table}[H]
\centering
\small
\setlength{\tabcolsep}{7pt}
\renewcommand{\arraystretch}{1.14}

\caption{
Training configuration of Stage~2 trajectory-mixed distillation on
Wan2.1-T2V-14B.
}
\label{tab:stage2_training_config}

\arrayrulecolor{SparkDiffusionRed}

\begin{tabular*}{\linewidth}{
@{\extracolsep{\fill}}
p{0.34\linewidth}
p{0.58\linewidth}
@{}
}

\toprule[1.25pt]

\rowcolor{SparkDiffusionHeader}
\textbf{Hyperparameter}
&
\textbf{Setting}
\\

\midrule[0.65pt]

\rowcolor{SparkDiffusionLight}
\multicolumn{2}{l}{
\textcolor{SparkDiffusionRed}{\textbf{Model setting}}
}
\\

Student backbone
& Wan2.1-T2V-14B \\

Student initialization
& Stage~1 sparse warm-up checkpoint \\

Teacher model
& Frozen dense Wan2.1-T2V-14B \\

Teacher CFG scale
& 5.0  \\

Training dataset
& Teacher-synthesized T2V dataset~\cite{zheng2026large} \\


Video setting
& $480\times832$ (480P), $720\times1280$ (720P), 81 frames \\

Attention sparsity
& 97\% \\

\addlinespace[1pt]
\rowcolor{SparkDiffusionLight}
\multicolumn{2}{l}{
\textcolor{SparkDiffusionRed}{\textbf{Trajectory-Mixed Distillation}}
}
\\


Noise split
& CrossDistill crosspoint ($0.934$) \\

High-noise objective
& PCM-style consistency on the high-noise segment \\

Low-noise objective
& DMD-style distribution matching on the low-noise segment \\

Student sampling steps
& 3 (one high-noise PCM step and two low-noise DMD steps; Sec.~\ref{subsec:distill}) \\

Fake model
& Dense Wan2.1-T2V-14B \\

Fake update ratio
& $10:1$ \\

Student trainable modules
& Full DiT backbone (including our low-rank compensation modules and gating parameters) \\

\addlinespace[1pt]
\rowcolor{SparkDiffusionLight}
\multicolumn{2}{l}{
\textcolor{SparkDiffusionRed}{\textbf{Optimization}}
}
\\

Optimizer
& AdamW \\

Learning rate
& $2\times10^{-6}$ (student);
$4\times10^{-7}$ (fake) \\

Weight decay
& 0.01 \\


Training steps
& 8{,}000  \\

Global batch size
&  16 \\


Training precision
& BF16 \\


EMA
& Disabled \\


\addlinespace[1pt]
\rowcolor{SparkDiffusionLight}
\multicolumn{2}{l}{
\textcolor{SparkDiffusionRed}{\textbf{Compute}}
}
\\

Hardware
& $8\times$ NVIDIA H100 80GB\\

Wall-clock training time
& $\approx$130 hours \\

GPU Hours
& $\approx$1\,K GPU hours \\

\bottomrule[1.25pt]

\end{tabular*}

\arrayrulecolor{black}
\end{table}


\end{document}